\newcounter{hypA}
\newenvironment{hypA}{\refstepcounter{hypA}\begin{itemize}
\item[{\bf A\arabic{hypA}}]}{\end{itemize}}
\def\rset{\mathbb{R}}
\def\nset{\mathbb{N}}
\newcommand{\eqsp}{\;}
\newcommand{\eqdef}{\ensuremath{\stackrel{\mathrm{def}}{=}}}
\newcommand{\esp}[1]{\mathbb{E}\left[#1\right]}
\newcommand{\var}[1]{\mathbb{V}\mathrm{ar}\left(#1\right)}
\newcommand{\cov}[2]{\mathbb{C}\mathrm{ov}\left(#1,#2\right)}
\newcommand{\proba}[1]{\mathbb{P}\left\{#1\right\}}
\newcommand{\dlim}{\ensuremath{\stackrel{\mathcal{D}}{\longrightarrow}}}
\newcommand{\plim}{\ensuremath{\stackrel{\mathbb{P}}{\longrightarrow}}}
\newcommand{\normallaw}[2]{\mathcal{N}\left(#1,#2\right)}
\newcommand{\abs}[1]{\left|#1\right|}
\def \one {\bf 1}
\def \e {\mbox{e}}
\def \d {\mbox{d}}
\def \refpop {A}
\def \testpop {B}
\def \numuid {n}
\def \ratioa {\alpha_{\refpop}}
\def \ratiob {\alpha_{\testpop}}
\newcommand{\Xa}[1][]{X^{\refpop}\ifthenelse{\equal{#1}{}}{}{_{#1}}}
\newcommand{\Ya}[1][]{Y^{\refpop}\ifthenelse{\equal{#1}{}}{}{_{#1}}}
\newcommand{\Xb}[1][]{X^{\testpop}\ifthenelse{\equal{#1}{}}{}{_{#1}}}
\newcommand{\Yb}[1][]{Y^{\testpop}\ifthenelse{\equal{#1}{}}{}{_{#1}}}
\newcommand{\epsilona}[1][]{\varepsilon^{\refpop}\ifthenelse{\equal{#1}{}}{}{_{#1}}}
\newcommand{\epsilonb}[1][]{\varepsilon^{\testpop}\ifthenelse{\equal{#1}{}}{}{_{#1}}}
\newcommand{\Xtildea}[1][]{\widetilde{\Xa[#1]}}
\newcommand{\Ytildea}[1][]{\widetilde{\Ya[#1]}}
\newcommand{\Xtildeb}[1][]{\widetilde{\Xb[#1]}}
\newcommand{\Ytildeb}[1][]{\widetilde{\Yb[#1]}}
\newcommand{\empmean}[2]{S^{#1}_{#2}}
\newcommand{\mean}[1]{m_{#1}}
\newcommand{\stddev}[1]{\sigma_{#1}}
\newcommand{\stddevtilde}[1]{\tilde{\stddev{1}}}
\def \correla {\rho_{\refpop}}
\def \correlb {\rho_{\testpop}}
\def \correltildea {\widetilde{\correla}}
\def \correltildeb {\widetilde{\correlb}}
\def \sumXtildea {\empmean{\Xtildea}{\numuid}}
\def \sumYtildea {\empmean{\Ytildea}{\numuid}}
\def \sumXtildeb {\empmean{\Xtildeb}{\numuid}}
\def \sumYtildeb {\empmean{\Ytildeb}{\numuid}}
\def \stddevXtildea {\stddev{\Xtildea}}
\def \stddevYtildea {\stddev{\Ytildea}}
\def \stddevXtildeb {\stddev{\Xtildeb}}
\def \stddevYtildeb {\stddev{\Ytildeb}}
\newcommand{\nonzero}[1][]{\varphi\ifthenelse{\equal{#1}{}}{}{\left(#1\right)}}
\newcommand{\estimator}[1]{\widehat{#1}[\numuid]}
\begin{document}

\title{Confidence intervals for AB-test}

\author{\name Cyrille Dubarry \email c.dubarry@criteo.com \\
       \addr Criteo\\
       32, rue Blanche\\
       Paris, France}

\editor{}

\maketitle
\begin{abstract}%
AB-testing is a very popular technique in web companies since it makes it possible to accurately predict the impact of a modification with the simplicity of a random split across users. One of the critical aspects of an AB-test is its duration and it is important to reliably compute confidence intervals associated with the metric of interest to know when to stop the test. In this paper, we define a clean mathematical framework to model the AB-test process. We then propose three algorithms based on bootstrapping and on the central limit theorem to compute reliable confidence intervals which extend to other metrics than the common probabilities of success. They apply to both absolute and relative increments of the most used comparison metrics, including the number of occurrences of a particular event and a click-through rate implying a ratio.
\end{abstract}

\begin{keywords}
AB-test, Confidence interval, Central limit theorem, Ratio of normal variables, Bootstrapping
\end{keywords}

\section{Introduction}
\label{sec:introduction}
Evaluating complex web systems and their impact on user behavior is a challenge of growing importance. Data-driven tools have become very popular in the last decades to help in deciding which algorithm, which website home page, which user interface, etc, provides the best results in terms of some relevant criteria such as the generated revenue, the click-through rate (CTR), the number of visits, or any other business metric. A detailed description of the general data-driven paradigm is available in \citet{darema:2004}.

Different experimention methods are available, \citep[for a primer]{avinash:2006}, and AB-testing, aka split or bucket testing, is wide-spread. For examples and best practices, we refer the reader to \citet{crook:frasca:2009,kohavi:longbotham:2009,kohavi:deng:2012} and references therein. This method compares two versions, $\refpop$ and $\testpop$, of a system by splitting the users randomly into two independent populations to which systems $\refpop$ and $\testpop$ are respectively applied. We use the word \emph{system} in a broad sense here as it can range from being the design of a web page \citep{swanson:2011} to more complex algorithms such as a bidder on a real time bidding ad server \citep{zhang:yuan:wang:2014}. Relevant metrics are then computed on each population and compared to decide which system performs better.

Such comparisons rely on statistical tests to evaluate their significance, see for example \citet{crocker:algina:1986,keppel:1991}, among which Z-tests assess if the neutral hypothesis can be rejected or not at a fixed level of certainty. The simplest example is the one measuring a click-through rate, or any other rate that can only lead to binary values. The click-through rate can be written as the empirical average of Bernoulli random variables equal to $1$ if the user has clicked and to $0$ otherwise. Then, the central limit theorem provides confidence intervals for both the click-through rate in each population and its absolute increment between the two populations \citep[see][for an example]{amazon:abtestmath}. In this case, the asymptotic variance is directly derived from the estimated click-through rate $p$ as $p(1-p)/\numuid$ where $\numuid$ is the number of users.

In practice, a user might click several times. Then the random variables that are averaged are no longer distributed under the Bernoulli law and the asymptotic variance can not be computed in the same way. We show that using such an approximation can even be dangerous through a numerical application to CTR. As stated in \citet{kohavi:longbotham:2009}, we need to use the variance of the number of clicks per user. They also provide confidence intervals for their relative increment using an approximation for the ratio adapted from \citet{willan:briggs:2006} but estimators for the involved variances are not provided for non Bernoulli random variables. Furthermore, these confidence intervals do not take into account the randomness of the number of displays made to users.

The litterature lacks of a formal modeling of the AB-test process. Previous works such as \citet{crook:frasca:2009,kohavi:longbotham:2009,kohavi:deng:2012} mainly focus on applications of this method and do not provide a well-defined statistical framework for the results' analysis. Most available sources for the practitioner are online calculators only dedicated to the Bernoulli case. A primer of the underlying theory applied to AB-test analysis is only given in online references such as \citet{amazon:abtestmath} but they do not go deeply into the statistical modeling and do not cover more general metrics than simple sums of independent Bernoulli random variables. In this paper, we introduce a formal framework for the AB-test process modeling only involving assumptions consistent with the data-driven paradigm. It allows us to prove some statistical properties of the involved estimators, including those based on ratios, and to get numerical methods to approximate the variances involved in the related central limit theorems. We also go beyond that by justifying the use of the bootstrap algorithm \citep{efron:tibshirani:1993} to compute confidence intervals for absolute and relative increments.

The mathematical formalization of the AB-test framework is given in Section \ref{sec:framework}. In Section \ref{sec:mainresults}, we provide exact asymptotic confidence intervals for any kind of metric that is obtained by summing quantities over the users, and for any metric computed as the ratio of such sums. We also get exact asymptotic confidence intervals for both their absolute and relative increments under few assumptions, most of them directly related to the AB-test process. Explicit estimators for the related asymptotic variances are provided. We additionaly show how to use bootstrapping to get confidence intervals when the data cannot be grouped by user, as is commonly the case in the big-data field. Section \ref{sec:experiments} numerically validates our assumptions and the proposed algorithms, while Appendices \ref{appendix:proofs} and \ref{appendix:proofs-clt} give formal proofs of the technical results of Section \ref{sec:mainresults}.

\section{Mathematical Formulation of the AB-test Process}
\label{sec:framework}
In order to translate the AB-test process into a mathematical framework, we introduce some random variables modeling the metrics that one wants to evaluate and the way in which the users are separated into two populations.

More precisely, let $(\Omega, \mathcal{F}, \mathbb{P})$ be a probability space and $\mathbb{E}[\cdot]$ the expectation operator under $\mathbb{P}$. We define a sequence of random vectors on $\rset^4\times \{0,1\}^2$
$$(\Xa[i], \Ya[i], \Xb[i], \Yb[i], \epsilona[i], \epsilonb[i])_{i \geq 1}\eqsp.$$
For each user $i \geq 1$, $\epsilona[i]$ and $\epsilonb[i]$ indicate the population that has been selected for this user:  $\epsilona[i]=1$ (resp. $\epsilonb[i]=1$) if and only if the user $i$ is in population $\refpop$ of size ratio $\ratioa \in [0,1]$ (resp. $\testpop$ of size ratio $\ratiob \in [0,1]$). Note that in general we will have $\ratioa + \ratiob = 1$ but this is not required and our analysis also applies to tests involving more than two populations.
The other variables model metrics of interest for the AB-tester. $\Xa[i]$ and $\Xb[i]$ are the same metric generated by the user $i$ if he was applied to systems $\refpop$ and $\testpop$ respectively. The same stands for $\Ya[i]$ and $\Yb[i]$ which model another metric.

\begin{example}[Comparison of revenue]\label{ex:revenue}
When the AB-tester wants to compare the revenue generated by algorithms $\refpop$ and $\testpop$, he compares the total revenue of each population, normalized by their ratio. They can be written:
$$\frac{1}{\ratioa} \sum_{i|\epsilona[i]=1} \Xa[i] \quad \mbox{and} \quad \frac{1}{\ratiob} \sum_{i|\epsilonb[i]=1} \Xb[i] \eqsp,$$
if $\Xa[i]$ and $\Xb[i]$ are the revenues generated by user $i$ under systems $\refpop$ and $\testpop$ respectively.
Note that, in practice, we can also normalize the total revenues by the real population sizes instead of their ratios and the quantities to compare become:
$$\dfrac{\sum_{i|\epsilona[i]=1} \Xa[i]}{\sum_{i|\epsilona[i]=1}1} \quad \mbox{and} \quad \dfrac{\sum_{i|\epsilona[i]=1} \Xb[i]}{\sum_{i|\epsilona[i]=1} 1} \eqsp.$$
\end{example}

\begin{example}[Comparison of CTR]\label{ex:ctr}
When the AB-tester wants to compare the CTR generated by algorithms $\refpop$ and $\testpop$, he compares the CTR of each population. They can be written:
$$\dfrac{\sum_{i|\epsilona[i]=1} \Xa[i]}{\sum_{i|\epsilona[i]=1} \Ya[i]} \quad \mbox{and} \quad \dfrac{\sum_{i|\epsilona[i]=1} \Xb[i]}{\sum_{i|\epsilona[i]=1} \Yb[i]} \eqsp,$$
if $\Xa[i]$ and $\Xb[i]$ are the clicks generated by user $i$, and $\Ya[i]$ and $\Yb[i]$ the number of displays shown to the same user under systems $\refpop$ and $\testpop$ respectively.
\end{example}

We introduce the following assumptions that will be easily followed in an AB-test setting.
\begin{hypA}
\label{hypA:iid}
The random vectors $(\Xa[i], \Ya[i], \Xb[i], \Yb[i], \epsilona[i], \epsilonb[i])_{i \geq 1},$ are independent and identically distributed.
\end{hypA}

\begin{hypA}
\label{hypA:abtest-independence}
The random vectors $(\Xa[1], \Ya[1], \Xb[1], \Yb[1])$ and $(\epsilona[1], \epsilonb[1])$ are independent.
\end{hypA}

\begin{hypA}
\label{hypA:moment}
The random variables $(\Xa[1], \Ya[1], \Xb[1], \Yb[1])$ are $L_2$-integrable and we define
\begin{equation}\label{eq:mean-def}
\mean{\Xa} \eqdef \esp{\Xa[1]}\eqsp, \eqsp \mean{\Ya} \eqdef \esp{\Ya[1]}\eqsp, \eqsp \mean{\Xb} \eqdef \esp{\Xb[1]}\eqsp, \eqsp \mean{\Yb} \eqdef \esp{\Yb[1]}\eqsp,
\end{equation}
\begin{equation}
\stddev{\Xa}^2 \eqdef \var{\Xa[1]}\eqsp, \eqsp \stddev{\Ya}^2 \eqdef \var{\Ya[1]}\eqsp, \eqsp \stddev{\Xb}^2 \eqdef \var{\Xb[1]}\eqsp, \eqsp \stddev{\Yb}^2 \eqdef \var{\Yb[1]}\eqsp,
\end{equation}
\begin{equation}
\correla \eqdef \dfrac{\cov{\Xa[1]}{\Ya[1]}}{\stddev{\Xa}\stddev{\Ya}} \eqsp, \eqsp \correlb \eqdef \dfrac{\cov{\Xb[1]}{\Yb[1]}}{\stddev{\Xb}\stddev{\Yb}} \eqsp.
\end{equation}
\end{hypA}

\begin{hypA}
\label{hypA:non-negative}
The random variables $(\Xa[1], \Ya[1], \Xb[1], \Yb[1])$ are almost surely non-negative and not almost surely zero, that is
\begin{align*}
& \proba{\Xa[1] < 0} = 0 \eqsp,	&	 \proba{\Xa[1] > 0} > 0 \eqsp,	\\
& \proba{\Ya[1] < 0} = 0 \eqsp,	&	 \proba{\Ya[1] > 0} > 0 \eqsp,	\\
& \proba{\Xb[1] < 0} = 0 \eqsp,	&	 \proba{\Xb[1] > 0} > 0 \eqsp,	\\
& \proba{\Yb[1] < 0} = 0 \eqsp,	&	 \proba{\Yb[1] > 0} > 0 \eqsp.	\\
\end{align*}
\end{hypA}

\begin{hypA}
\label{hypA:epsilon-law}
The random variables $\left(\epsilona[1],\epsilonb[1]\right)$ satisfies:
\begin{enumerate}
\item \label{hypA:epslion-law:marginal} $\epsilona[1]$ and $\epsilonb[1]$ follow Bernoulli laws of respective parameters $\ratioa$ and $\ratiob$.
\item \label{hyp1:epsilon-law:joint} $\epsilona[1]\epsilonb[1] = 0\eqsp.$
\end{enumerate}
\end{hypA}

A user can only be assigned to one population, which is ensured by Assumption A\ref{hypA:epsilon-law}-\ref{hyp1:epsilon-law:joint}. Assumption A\ref{hypA:epsilon-law}-\ref{hypA:epslion-law:marginal} sets the ratio of populations $\refpop$ and $\testpop$ to be respectively $\ratioa$ and $\ratiob$.

Assumption A\ref{hypA:abtest-independence} reflects the fact that the population attribution process does not affect the user reaction to the applied system while Assumption A\ref{hypA:moment} is purely technical. This is the only assumption that is not implied by the AB-test process but it will guarantee the convergence of the estimators. Assumption A\ref{hypA:non-negative} is consistent with the metrics that we are studying. They will typically be zero with a high probability and positive otherwise (for example, the number of clicks).

Finally, Assumption A\ref{hypA:iid} models the un-identifiability of the users. They are all independent and, without prior knowledge, identically distributed. The whole AB-test process relies on this assumption by randomly splitting the users into two populations.

It is worthwhile to note that the metrics of interest $(\Xa[i], \Ya[i], \Xb[i], \Yb[i])_{i \geq 1}$ are defined for each user and for each system, independently of the population split. The AB-test process will give access to only $\Xa[i]$ or $\Xb[i]$ for a given user $i$, but they can still both be defined even when they are not observed. This is the main interest of this modeling that allows us to write those variables independently of the population. Furthermore, we circumvent the issue of having hidden variables by introducing a new set of variables that will always be observed. To that purpose, we simply set $\Xa[i]$ to $0$ when it is not observed, i.e. when the user $i$ is not in population $\refpop$. This is formalized in the following definition.
\begin{definition}\label{def:tilde}
For each user $i \geq 1$, we define
\begin{equation*}
\Xtildea[i] \eqdef \dfrac{\epsilona[i]\Xa[i]}{\ratioa} \eqsp, \eqsp \Ytildea[i] \eqdef \dfrac{\epsilona[i]\Ya[i]}{\ratioa} \eqsp, \eqsp \Xtildeb[i] \eqdef \dfrac{\epsilonb[i]\Xb[i]}{\ratiob} \eqsp, \eqsp \Ytildeb[i] \eqdef \dfrac{\epsilonb[i]\Yb[i]}{\ratiob}\eqsp.
\end{equation*}
\end{definition}

\begin{remark} \label{rem:iid}
We trivially obtain from Assumption A\ref{hypA:iid} that the random vectors $(\Xtildea[i], \Ytildea[i], \Xtildeb[i], \Ytildeb[i])_{i \geq 1}$ are independent and identically distributed.
\end{remark}

Using Definition \ref{def:tilde}, sums of the form
$$\frac{1}{\ratioa} \sum_{i|\epsilona[i]=1} \Xa[i]\eqsp,$$
can by re-written in a more appealing way as
$$\sum_{i=1}^{\numuid} \Xtildea[i]\eqsp,$$
where the random variables $(\Xtildea[i])_{i \geq 1}$ are summed on \emph{all} the users independently of their population, which leads to the following sum definitions for any number of users $\numuid \in \nset$:
\begin{equation}\label{eq:sum-def}
\sumXtildea \eqdef \frac{1}{\numuid}\sum_{i=1}^{\numuid} \Xtildea[i]\eqsp,\eqsp \sumYtildea \eqdef \frac{1}{\numuid}\sum_{i=1}^{\numuid} \Ytildea[i]\eqsp,\eqsp \sumXtildeb \eqdef \frac{1}{\numuid}\sum_{i=1}^{\numuid} \Xtildeb[i]\eqsp,\eqsp \sumYtildeb \eqdef \frac{1}{\numuid}\sum_{i=1}^{\numuid} \Ytildeb[i]\eqsp.
\end{equation}
In the case of Example \ref{ex:revenue}, we will have to compare either two sums over the same indices $\sumXtildea$ and $\sumXtildeb$ when the normalization is done by the population ratios ; or two ratios of sums over the same indices $\sumXtildea/\sumYtildea$ and $\sumXtildeb/\sumYtildeb$, where $\Ya \equiv 1$ and $\Yb \equiv 1$, when the normalization is done by the real population sizes. In the case of Example \ref{ex:ctr}, the ratios to compare become similarly $\sumXtildea/\sumYtildea$ and $\sumXtildeb/\sumYtildeb$.

Writting the estimators this way validates the use of the bootstrap technique \citep{efron:tibshirani:1993} to get confidence intervals. For the relative increments of the metrics of interest, this can be done through the study of ratio:
\begin{equation}\label{eq:sumtilderatio}
\dfrac{\sumXtildeb}{\sumXtildea} \quad \mbox{and} \quad \dfrac{\sumXtildeb/\sumYtildeb}{\sumXtildea/\sumYtildea}\eqsp.
\end{equation}
Three algorithms will be derived in the following Section to get confidence intervals on such quantities.

\section{Estimator Convergence and Algorithms for Confidence Intervals}
\label{sec:mainresults}

The previous modeling has been designed to translate AB-test metrics into functions of sums of i.i.d. variables as in \eqref{eq:sumtilderatio}.
The i.i.d. property allows us to design and validate a bootstrap technique to get confidence intervals, and dealing only with sums adds the ability to derive central limit theorems for all the metrics and their increments (both absolute and relative).

\subsection{Confidence Interval Computation}
\label{subsec:bootstrap}

According to Remark \ref{rem:iid}, the random vectors $(\Xtildea[i], \Ytildea[i], \Xtildea[i], \Ytildeb[i])_{i \geq 1}$ are i.i.d., and by Definition \ref{def:tilde} we have for $i \geq 1$
\begin{equation*}
\abs{\Xtildea[i]} \leq \frac{1}{\ratioa} \abs{\Xa[i]} \eqsp, \eqsp \abs{\Ytildea[i]} \leq \frac{1}{\ratioa} \abs{\Ya[i]} \eqsp, \eqsp \abs{\Xtildeb[i]} \leq \frac{1}{\ratioa} \abs{\Xb[i]} \eqsp, \eqsp \abs{\Ytildeb[i]} \leq \frac{1}{\ratioa} \abs{\Yb[i]}\eqsp.
\end{equation*}
Assumption A\ref{hypA:moment} then shows that $(\Xtildea[i], \Ytildea[i], \Xtildea[i], \Ytildeb[i])_{i \geq 1}$ are $L_1$ integrable. We thus can apply the law of large numbers to the sums of interest $(\sumXtildea, \sumYtildea, \sumXtildeb, \sumYtildeb)$ and show that they converge to $(\mean{\Xa}, \mean{\Ya},\mean{\Xb},\mean{\Yb})$. We then get that for any continuous function $f$, the quantity $f(\sumXtildea, \sumXtildea, \sumYtildea, \sumYtildeb)$ is a consistent estimator of $f(\mean{\Xa}, \mean{\Ya},\mean{\Xb},\mean{\Yb})$.

The case of a ratio is dealt with by introducting the following transformation.
\begin{definition}\label{def:nonzero}
We define the function $\nonzero$ from $\rset$ to $\rset^*$ defined by
\begin{equation*}
\forall x \in \rset \eqsp, \quad \nonzero[x] \eqdef \left\{ \begin{array}{l@{\quad,\mbox{if }}l} 1 &x = 0\eqsp,\\ x &  x \neq 0 \eqsp. \end{array}\right.
\end{equation*}
\end{definition}
We will apply $\nonzero$ to all the denominators in the following theorems, and, according to the positiveness ensured by Assumption A\ref{hypA:non-negative}, the ratios are continuous functions of the non-zero sums. It is only a technical point, as in practice we would not define the ratio for a null denominator. In theoretical applications, Lemma \ref{lem:nonzero-probaconv} in Appendix \ref{appendix:proofs} allows us to replace the sums by their non-zero versions obtained by applying the operator $\nonzero$, but for the sake of simplicity we will not use it when describing the bootstrap.

If we denote by $D$ the distribution of $(\Xtildea[1], \Ytildea[1], \Xtildeb[1], \Ytildeb[1])$, then all the quantities that we are estimating can be written as a functional $F(D) \eqdef f(\mean{\Xa},\mean{\Ya},\mean{\Xb},\mean{\Yb})$, and their estimators are asymptotically normal as shown in the relevant Propositions of Section \ref{subsec:clt}. The link between estimators, $f$, $F$, and their central limit theorem result is summarized in Table \ref{tab:link-estimators}.
\begin{table}
\begin{center}
\begin{tabular}{||c|c|c|c||}
\hline\hline
Estimator										&	$f(x,y,x',y')$		&	$F(D)$			& CLT \\
\hline \hline
$\sumXtildeb - \sumXtildea$						&	$x'-x$			&	$\mean{\Xb} - \mean{\Xa}$	&	Prop. \ref{prop:clt-diff} \\
\hline
$\sumXtildeb / \sumXtildea$						&	$x'/x$			&	$\mean{\Xb} / \mean{\Xa}$ 	&	Prop. \ref{prop:clt-relative-diff}\\
\hline
$\sumXtildeb / \sumYtildeb - \sumXtildea / \sumYtildea$	&	$x'/y' - x/y	$		&	$\mean{\Xb} / \mean{\Yb} -  \mean{\Xa} / \mean{\Ya}$	&	Prop. \ref{prop:clt-ratio-absolute-diff} \\
\hline
$\dfrac{\sumXtildeb / \sumYtildeb}{\sumXtildea / \sumYtildea}$	&	$\dfrac{x'/y'}{x/y}$		&	$\dfrac{\mean{\Xb} / \mean{\Yb}}{ \mean{\Xa} / \mean{\Ya}}$	&	Prop. \ref{prop:clt-ratio-relative-diff} \\
\hline \hline
\end{tabular}
\end{center}
\caption{Different estimators of interest}\label{tab:link-estimators}
\end{table}

\paragraph{Bootstrapping}
In this specific framework, bootstrapping can be used by randomly selecting $\numuid$ users (possibly picking the same user several times) and computing the estimator with this random set of users. Repeating this $M$ times provides an empirical distribution of the estimator of $F(D)$. The $M$ estimator values can be computed with only one pass on the dataset using an online version of bootstrapping described in \citet{oza:russel:2001,oza:2005}.

For each user $i$, a Poisson random variable $Z_i$ is simulated and the current user is included $Z_i$ times.
\begin{algorithm}[tb]
    \begin{algorithmic}[1]
        \caption{\quad Online bootstrapping}\label{alg:online-bootstrapping}
        \State \underline{\em Inputs:} A dataset $(I_l, \epsilona[I_l]x_l^{\refpop}, \epsilona[I_l]y_l^{\refpop}, \epsilonb[I_l]x_l^{\testpop}, \epsilonb[I_l]y_l^{\testpop})_{l = 1}^M$ and random variables $(Z_i^m)_{m=1:M}^{i=1:\numuid}$.
	\State \underline{\em Initialization:} Set $\left(\Gamma_{k,m}\right)_{1 \leq k \leq 4}^{1 \leq m \leq M}$ a null $4\times M$ matrix of sum estimators.
        \State \underline{\em Loop on the data set:}
        \For{$l$ from $1$ to $L$}
         	\State Set $i = I_l$.
	 	\For{$m$ from $1$ to $M$}
			\State Set $\Gamma_{1,m} = \Gamma_{1,m} + Z_i^m \epsilona[i] x_l^{\refpop} / \ratioa$.
			\State Set $\Gamma_{2,m} = \Gamma_{2,m} + Z_i^m \epsilona[i] y_l^{\refpop} / \ratioa$.
			\State Set $\Gamma_{3,m} = \Gamma_{3,m} + Z_i^m \epsilonb[i] x_l^{\testpop} / \ratiob$.
			\State Set $\Gamma_{4,m} = \Gamma_{4,m} + Z_i^m \epsilonb[i] y_l^{\testpop} / \ratiob$.
		\EndFor
        \EndFor
	\State \underline{\em Computation of the estimators}
	\For{$m$ from $1$ to $M$}
		\State Set $\numuid_m \eqdef \sum_{i=1}^{\numuid} Z_i^m$.
		\State Set $\widehat{F_m} \eqdef f\left(\frac{1}{\numuid_m}\Gamma_{1,m}, \frac{1}{\numuid_m}\Gamma_{2,m}, \frac{1}{\numuid_m}\Gamma_{3,m}, \frac{1}{\numuid_m}\Gamma_{4,m}\right)$.
	\EndFor
        \State \underline{\em Outputs:} $\left(\widehat{F_m}\right)_{m=1}^M$.
    \end{algorithmic}
\end{algorithm}
The full procedure is detailed in Algorithm \ref{alg:online-bootstrapping} and works well even if the dataset is not grouped by user. In this case, each line $l$ of the dataset is associated to a user $i = I_l$ and contains a vector $(\epsilona[I_l]x_l^{\refpop}, \epsilona[I_l]y_l^{\refpop}, \epsilonb[I_l]x_l^{\testpop}, \epsilonb[I_l]y_l^{\testpop})$ such that for any $i \geq 1$
\begin{align*}
&\Xa[i] = \sum_{l=1|I_l=i}^L x_l^{\refpop} \eqsp, & \Ya[i] = \sum_{l=1|I_l=i}^L y_l^{\refpop} \eqsp, \\
&\Xb[i] = \sum_{l=1|I_l=i}^L x_l^{\testpop} \eqsp, & \Yb[i] = \sum_{l=1|I_l=i}^L y_l^{\testpop} \eqsp.
\end{align*}
It relies on a pseudo-random generator that is able to generate $M$ Poisson variables $(Z_i^m)_{1 \leq m \leq M}$ for each user $i$.

\paragraph{Confidence interval algorithms}
The $M$ estimators $\left(\widehat{F_m}\right)_{m=1}^M$ obtained in Algorithm \ref{alg:online-bootstrapping} can then be used to derive empirical quantiles and obtain confidence intervals with Algorithm \ref{alg:ci-bootstrap}.
\begin{algorithm}[tb]
    \begin{algorithmic}[1]
        \caption{\quad Confidence interval with bootstrapping}\label{alg:ci-bootstrap}
        \State \underline{\em Inputs:} The bootstrap distribution $\left(\widehat{F_m}\right)_{m=1}^M$ given by Algorithm \ref{alg:online-bootstrapping} and a confidence level $q$.
	\State Compute $F_{\mbox{min}}$ as the empirical quantile of $\left(\widehat{F_m}\right)_{m=1}^M$ of order $(1-q)/2$.
	\State Compute $F_{\mbox{max}}$ as the empirical quantile of $\left(\widehat{F_m}\right)_{m=1}^M$ of order $(1+q)/2$.
        \State \underline{\em Outputs:} $\displaystyle \left[ F_{\mbox{min}}, F_{\mbox{max}} \right]$.
    \end{algorithmic}
\end{algorithm}
However, quantile approximation for accurate confidence intervals requires $M$ to be big enough and Algorithm \ref{alg:ci-bootstrap} is only feasible if the number of users $\numuid$ is small enough.

Another way of computing confidence intervals is to use one of the central limit theorems stated in Section \ref{subsec:clt} on the condition that the implied variances can be easily estimated from the data. The resulting algorithm is given in Algorithm \ref{alg:clt-ci} where we use the normal cumulative density function $N$ defined by
\begin{equation}
\forall x \in \rset \eqsp, \quad N(x) \eqdef \int_{-\infty}^x \dfrac{\e^{-t^2/2}}{\sqrt{2\pi}} \d t \eqsp.
\end{equation}
\begin{algorithm}[tb]
    \begin{algorithmic}[1]
        \caption{\quad Confidence interval with CLT}\label{alg:clt-ci}
        \State \underline{\em Inputs:} $(\Xtildea[i], \Ytildea[i], \Xtildeb[i], \Ytildeb[i])_{i=1}^{\numuid}$ and a confidence level $q$.
	\State Set $\displaystyle s \eqdef N^{-1}\left( \dfrac{1+q}{2} \right)$.
	\State Estimate the asymptotic variance $\widehat{\sigma_{\numuid}}^2$ using the relevant Proposition (see Table \ref{tab:link-estimators}).
        \State \underline{\em Outputs:} $\displaystyle \left[f\left(\sumXtildea,\sumYtildea,\sumXtildeb,\sumYtildeb\right) - s \widehat{\sigma_{\numuid}}, f\left(\sumXtildea,\sumYtildea,\sumXtildeb,\sumYtildeb\right) + s \widehat{\sigma_{\numuid}}\right]$.
    \end{algorithmic}
\end{algorithm}

In practice, the data is not aggregated by user and we have to do so as a first step in order to get the vectors $(\Xtildea[i], \Ytildea[i], \Xtildeb[i], \Ytildeb[i])_{i \geq 1}$ and estimate the related variances and covariances. This can be quite costly as it requires more than one reading of the dataset if the user can be found in several lines. In the case where each user appears only once, this will be the quicker algorithm as it does not need any simulation.

We can take advantage of both Algorithms \ref{alg:ci-bootstrap} and \ref{alg:clt-ci} by using bootstrapping to approximate the estimator variance and the asymptotic normality to derive confidence intervals as described in Algorithm \ref{alg:fast-ci}. The variance estimation only requires a few number of bootstraps $M$ and the dataset is read only once. This algorithm will be shown in Section \ref{sec:experiments} to perform better than Algorithm \ref{alg:ci-bootstrap} for a given computational cost. Though, this algorithm relies on an asymptotic regime and is relevant only when the number of users $\numuid$ is large enough. Otherwise, pure bootstrapping may be a better alternative as it works for any value of $\numuid$.

\begin{algorithm}[tb]
    \begin{algorithmic}[1]
        \caption{\quad Confidence interval with bootstrapping and CLT}\label{alg:fast-ci}
        \State \underline{\em Inputs:} The bootstrap distribution $\left(\widehat{F_m}\right)_{m=1}^M$ given by Algorithm \ref{alg:online-bootstrapping} and a confidence level $q$.
	\State Set $\displaystyle s \eqdef N^{-1}\left( \dfrac{1+q}{2} \right)$.
	\State Set $\displaystyle \left(\widehat{\sigma_{\numuid}^F}\right)^2 \eqdef \dfrac{1}{M-1} \sum_{m=1}^M \left(\widehat{F_m} - \frac{1}{M}\sum_{p=1}^M\widehat{F_p}\right)^2 $
        \State \underline{\em Outputs:} $\displaystyle \left[f\left(\sumXtildea,\sumYtildea,\sumXtildeb,\sumYtildeb\right) - s \widehat{\sigma_{\numuid}^F}, f\left(\sumXtildea,\sumYtildea,\sumXtildeb,\sumYtildeb\right) + s \widehat{\sigma_{\numuid}^F}\right]$.
    \end{algorithmic}
\end{algorithm}

\subsection{Central Limit Theorem}
\label{subsec:clt}
We now check that the estimators given in Table \ref{tab:link-estimators} all satisfy a central limit theorem. For improved readability, proofs have been postponed to Appendix \ref{appendix:proofs-clt}
\begin{theorem}[Central limit theorem] \label{th:clt}
Under Assumptions A\ref{hypA:iid}-\ref{hypA:epsilon-law}, the vector $(\sumXtildea, \sumYtildea, \sumXtildeb, \sumYtildeb)$, defined in \eqref{eq:sum-def}, satisfies the following central limit theorem
\begin{equation*}
\sqrt{\numuid}
\left(
	\begin{array}{c}
		\sumXtildea - \mean{\Xa} \\
		\sumYtildea - \mean{\Ya} \\
		\sumXtildeb - \mean{\Xb} \\
		\sumYtildeb - \mean{\Yb}
	\end{array}
\right)
\dlim
\normallaw
	{\left(
		\begin{array}{c}
			0 \\
			0 \\
			0 \\
			0
		\end{array}
	\right)}
	{\left(
		\Sigma\left(\Xtildea[1], \Ytildea[1], \Xtildeb[1], \Ytildeb[1] \right)
	\right)}\eqsp,
\end{equation*}
where $\Sigma\left(\Xtildea[1], \Ytildea[1], \Xtildeb[1], \Ytildeb[1] \right)$ is the covariance matrix of $\left(\Xtildea[1], \Ytildea[1], \Xtildeb[1], \Ytildeb[1] \right)$ defined by the variances
\begin{align}
\label{eq:varxtildea-def}
  \stddevXtildea^2 \eqdef \var{\Xtildea[1]} &= \dfrac{1}{\ratioa}\stddev{\Xa}^2 + \dfrac{1-\ratioa}{\ratioa} \mean{\Xa}^2 \eqsp,\\
\label{eq:varytildea-def}
  \stddevYtildea^2\eqdef\var{\Ytildea[1]} &= \dfrac{1}{\ratioa}\stddev{\Ya}^2 + \dfrac{1-\ratioa}{\ratioa} \mean{\Ya}^2  \eqsp,\\
\label{eq:varxtildeb-def}
 \stddevXtildeb^2 \eqdef \var{\Xtildeb[1]}& = \dfrac{1}{\ratiob}\stddev{\Xb}^2 + \dfrac{1-\ratiob}{\ratiob} \mean{\Xb}^2  \eqsp,\\
\label{eq:varytildeb-def}
 \stddevYtildeb^2\eqdef \var{\Ytildeb[1]}& = \dfrac{1}{\ratiob}\stddev{\Yb}^2 + \dfrac{1-\ratiob}{\ratiob} \mean{\Yb}^2   \eqsp,
\end{align}
the covariances inside each population
\begin{align}
\label{eq:rhotildea-def}
 \cov{\Xtildea[1]}{\Ytildea[1]} = \dfrac{1}{\ratioa}\correla\stddev{\Xa}\stddev{\Ya} + \dfrac{1-\ratioa}{\ratioa} \mean{\Xa}\mean{\Ya}& \eqdef \correltildea \stddevXtildea \stddevYtildea \eqsp, \\
\label{eq:rhotildeb-def}
 \cov{\Xtildeb[1]}{\Ytildeb[1]} = \dfrac{1}{\ratiob}\correlb\stddev{\Xb}\stddev{\Yb} + \dfrac{1-\ratiob}{\ratiob} \mean{\Xb}\mean{\Yb}& \eqdef \correltildeb \stddevXtildeb \stddevYtildeb \eqsp,
\end{align}
and the cross population covariances
\begin{align*}
& \cov{\Xtildea[1]}{\Xtildeb[1]} = -\mean{\Xa}\mean{\Xb}\eqsp,\\
& \cov{\Xtildea[1]}{\Ytildeb[1]} = -\mean{\Xa}\mean{\Yb}\eqsp,\\
& \cov{\Ytildea[1]}{\Xtildeb[1]} = -\mean{\Ya}\mean{\Xb}\eqsp,\\
& \cov{\Ytildea[1]}{\Ytildeb[1]} = -\mean{\Ya}\mean{\Yb}\eqsp.\\
\end{align*}
\end{theorem}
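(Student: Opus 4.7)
The plan is to reduce the statement to the standard multivariate Lindeberg--Lévy CLT for i.i.d.\ $L_2$ random vectors, and then compute the ten entries of the limiting covariance matrix directly from Definition \ref{def:tilde} using the two properties of $(\epsilona[1],\epsilonb[1])$ in Assumption A\ref{hypA:epsilon-law}.

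By Remark \ref{rem:iid} the vectors $(\Xtildea[i], \Ytildea[i], \Xtildeb[i], \Ytildeb[i])_{i\geq 1}$ are i.i.d., and Definition \ref{def:tilde} together with $\epsilona[i],\epsilonb[i] \in \{0,1\}$ gives $|\Xtildea[i]| \leq \Xa[i]/\ratioa$ and the analogous bounds for the other three coordinates, so Assumption A\ref{hypA:moment} lifts to $L_2$-integrability of the tilde vector. The multivariate CLT then yields convergence of $\sqrt{\numuid}$ times the centred empirical mean to a centred Gaussian with covariance matrix $\Sigma(\Xtildea[1],\Ytildea[1],\Xtildeb[1],\Ytildeb[1])$. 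It remains to verify that the means of the tildes are $(\mean{\Xa},\mean{\Ya},\mean{\Xb},\mean{\Yb})$ and to identify the entries of this matrix. Factoring $\epsilona[1]$ from $\Xa[1]$ via Assumption A\ref{hypA:abtest-independence} and using $\esp{\epsilona[1]} = \ratioa$ from A\ref{hypA:epsilon-law}-\ref{hypA:epslion-law:marginal} gives $\esp{\Xtildea[1]} = \mean{\Xa}$, and the three other means are handled identically.

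For the variances, the key simplification is that $\epsilona[1]$ is Bernoulli, hence $\epsilona[1]^2 = \epsilona[1]$; combined with A\ref{hypA:abtest-independence} this yields
$$\esp{(\Xtildea[1])^2} = \esp{\epsilona[1]}\,\esp{(\Xa[1])^2}/\ratioa^2 = (\stddev{\Xa}^2 + \mean{\Xa}^2)/\ratioa,$$
and subtracting $\mean{\Xa}^2$ recovers \eqref{eq:varxtildea-def}; the three other variance formulas follow symmetrically. For the within-population covariances, the same argument applied to $\epsilona[1]^2 \Xa[1]\Ya[1]/\ratioa^2$ together with $\cov{\Xa[1]}{\Ya[1]} = \correla \stddev{\Xa}\stddev{\Ya}$ gives \eqref{eq:rhotildea-def} and its $B$ analogue.

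The only substantively new ingredient, and the main thing to get right, is the cross-population covariances, where Assumption A\ref{hypA:epsilon-law}-\ref{hyp1:epsilon-law:joint} is crucial: it forces $\epsilona[1]\epsilonb[1] = 0$ almost surely, so factoring out $(\epsilona[1],\epsilonb[1])$ by A\ref{hypA:abtest-independence} gives
$$\esp{\Xtildea[1]\Xtildeb[1]} = \esp{\epsilona[1]\epsilonb[1]}\,\esp{\Xa[1]\Xb[1]}/(\ratioa\ratiob) = 0,$$
hence $\cov{\Xtildea[1]}{\Xtildeb[1]} = -\mean{\Xa}\mean{\Xb}$ as claimed; the three remaining cross covariances are obtained identically. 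The real obstacle is therefore merely bookkeeping: the CLT and integrability steps are entirely standard, and every entry of $\Sigma$ reduces to the two identities $\esp{(\epsilona[1])^2} = \ratioa$ and $\epsilona[1]\epsilonb[1] = 0$ a.s.\ combined with the factorisation allowed by A\ref{hypA:abtest-independence}.
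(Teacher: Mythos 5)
Your proposal is correct and follows essentially the same route as the paper's proof: reduce to the multivariate i.i.d.\ CLT after checking $L_2$-integrability of the tilde variables via the bound $\abs{\Xtildea[1]} \leq \abs{\Xa[1]}/\ratioa$, then compute each entry of $\Sigma$ using $(\epsilona[1])^2 = \epsilona[1]$, the factorisation allowed by A\ref{hypA:abtest-independence}, and $\epsilona[1]\epsilonb[1] = 0$ for the cross terms. The only cosmetic difference is that for the cross covariances the paper notes $\Xtildea[1]\Xtildeb[1] = 0$ almost surely directly from $\epsilona[1]\epsilonb[1]=0$, whereas you additionally invoke A\ref{hypA:abtest-independence} to factor the expectation, which is harmless but not needed.
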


The convergence is done at rate $\sqrt{\numuid}$ where $\numuid$ is the total number of users, and not the number of users in a population. However, the variance of each estimator decreases with its relative population size thanks to factors $\ratioa$ and $\ratiob$ found in the denominators of the four variances.

Furthermore, these variances are composed of two terms. One that comes purely from the variance of the metrics of interest (ex: $\stddev{\Xa}^2$) and another one added by the AB-test process which randomly attributes each user to a population (ex: $(1-\ratioa)\mean{\Xa}^2$). They can be understood when looking at extreme cases. When population $\refpop$ includes all the users, i.e. $\ratioa = 1$, the randomness of the AB-test process disappears and we simply get $\var{\Xtildea[1]} = \stddev{\Xa}^2$. On the other hand, if the metric of interest $\Xa$ is purely deterministic, let's say $\Xa \equiv 1$ in which case we are interested in the number of users per population, then the variance becomes $\frac{1-\ratioa}{\ratioa}$ which is the variance of $\epsilona[1]/\ratioa$. However, in practice, we often have $\stddev{\Xa} >> \mean{\Xa}$ and the second term becomes almost negligible.

Another fact shown by Theorem \ref{th:clt} is that the metrics of the two populations are not independent! This was actually intuitive as when a user is associated to one population and thus included in the corresponding sum, the other population looses this user. If $\mean{\Xa}$ and $\mean{\Xb}$ are positive, then the correlation is negative following the previous intuition.

Finally, Theorem \ref{th:clt} provides the asymptotic distribution of the joint law of the four empirical averages we are interested in to compare the two populations. Simple linear combinations such as $\sumXtildeb-\sumXtildea$ remain asymptoticaly normal and confidence intervals can easily been derived as stated in Proposition \ref{prop:clt-diff}. This allows for comparing, for example, the absolute increment of the number of displays per user generated by the two algorithms $\refpop$ and $\testpop$.

\begin{proposition}[CLT for $f(x,y,x',y') = x' - x$]\label{prop:clt-diff}
Under Assumptions A\ref{hypA:iid}-\ref{hypA:epsilon-law}, the absolute increment $\sumXtildeb- \sumXtildea$ satisfies the following central limit theorem
\begin{equation*}
\sqrt{\numuid} \left[\left(\sumXtildeb- \sumXtildea\right) - \left(\mean{\Xb}-\mean{\Xa}\right)\right]
 \dlim \normallaw{0}{\stddevXtildea^2 + \stddevXtildeb^2 + 2\mean{\Xa}\mean{\Xb}} \eqsp,
\end{equation*}
where $\stddevXtildea$ and $\stddevXtildeb$ are defined respectively in \eqref{eq:varxtildea-def} and \eqref{eq:varxtildeb-def}.
\end{proposition}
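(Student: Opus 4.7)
The plan is to obtain this one-dimensional CLT as a direct consequence of the joint CLT established in Theorem \ref{th:clt}, by projecting along the linear form $(x,y,x',y') \mapsto x' - x$. Since the vector $\sqrt{\numuid}\bigl(\sumXtildea - \mean{\Xa}, \sumYtildea - \mean{\Ya}, \sumXtildeb - \mean{\Xb}, \sumYtildeb - \mean{\Yb}\bigr)$ converges in distribution to a centered Gaussian with covariance $\Sigma(\Xtildea[1], \Ytildea[1], \Xtildeb[1], \Ytildeb[1])$, the continuous mapping theorem (applied to the continuous linear map $L(x,y,x',y') = x' - x$, or equivalently the fact that linear combinations of jointly Gaussian vectors are Gaussian) immediately yields
\begin{equation*}
\sqrt{\numuid}\left[\bigl(\sumXtildeb - \sumXtildea\bigr) - \bigl(\mean{\Xb} - \mean{\Xa}\bigr)\right] \dlim \normallaw{0}{v}\eqsp,
\end{equation*}
for some asymptotic variance $v = L\, \Sigma\, L^{\top}$ that remains to be identified.

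The second step is then to compute $v$ explicitly. Writing $v = \var{\Xtildeb[1] - \Xtildea[1]}$, I would expand
\begin{equation*}
v = \var{\Xtildea[1]} + \var{\Xtildeb[1]} - 2\cov{\Xtildea[1]}{\Xtildeb[1]}\eqsp,
\end{equation*}
and substitute the expressions provided by Theorem \ref{th:clt}: the first two terms are $\stddevXtildea^2$ and $\stddevXtildeb^2$ from \eqref{eq:varxtildea-def} and \eqref{eq:varxtildeb-def}, while the cross-population covariance is $\cov{\Xtildea[1]}{\Xtildeb[1]} = -\mean{\Xa}\mean{\Xb}$. The minus sign in the cross-covariance combined with the minus sign in the variance-of-a-difference formula produces the advertised $+2\mean{\Xa}\mean{\Xb}$ term, giving $v = \stddevXtildea^2 + \stddevXtildeb^2 + 2\mean{\Xa}\mean{\Xb}$.

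There is essentially no hard part: the entire work has been done upstream in Theorem \ref{th:clt}, and the proposition is a one-line projection plus a two-line variance bookkeeping. The only subtlety worth flagging is the sign of the cross-population covariance — which is negative precisely because the AB-test process forces $\epsilona[1]\epsilonb[1] = 0$ (Assumption A\ref{hypA:epsilon-law}-\ref{hyp1:epsilon-law:joint}), and which, when flipped by the $-2\cov{\cdot}{\cdot}$ term, \emph{increases} rather than decreases the asymptotic variance of the absolute increment compared to the independent-populations case.
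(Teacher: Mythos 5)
Your proposal is correct and follows essentially the same route as the paper: both project the joint CLT of Theorem \ref{th:clt} through the linear map $(x,y,x',y')\mapsto x'-x$ via the continuous mapping theorem, the asymptotic variance being the quadratic form $(-1,0,1,0)\,\Sigma\,(-1,0,1,0)^{T}$. Your explicit expansion of that quadratic form into $\stddevXtildea^2 + \stddevXtildeb^2 + 2\mean{\Xa}\mean{\Xb}$ using the cross-covariance $-\mean{\Xa}\mean{\Xb}$ is exactly the bookkeeping the paper leaves implicit, and your sign analysis is right.
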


When coming to confidence intervals for relative increments such as $\sumXtildeb/\sumXtildea$, or for ratio metrics such as $\sumXtildea/\sumYtildea$, without further steps, one would need to compute quantiles of the ratio of two correlated normal random variables. This problem is known to be difficult and has been discussed for decades, see \citet{marsaglia:2006} and references therein.

However, such ratios can themselves be shown to be asymptotically normal in our setup as stated in Propositions \ref{prop:clt-relative-diff} and \ref{prop:clt-ratio}.

\begin{proposition}[CLT for $f(x,y,x',y') = x'/x$]\label{prop:clt-relative-diff}
Under Assumptions A\ref{hypA:iid}-\ref{hypA:epsilon-law}, the ratio $\sumXtildeb/\nonzero[\sumXtildea]$ satisfies the following central limit theorem
\begin{equation*}
\sqrt{\numuid}\left( \dfrac{\sumXtildeb}{\nonzero[\sumXtildea]} - \dfrac{\mean{\Xb}}{\mean{\Xa}}\right)
\dlim \normallaw{0}{\left(\dfrac{\mean{\Xb}}{\mean{\Xa}}\right)^2 \left[ \left(\dfrac{\stddevXtildea}{\mean{\Xa}}\right)^2 + \left(\dfrac{\stddevXtildeb}{\mean{\Xb}}\right)^2 + 2 \right]} \eqsp,
\end{equation*}
where $\stddevXtildea$ and $\stddevXtildeb$ are defined respectively in \eqref{eq:varxtildea-def} and \eqref{eq:varxtildeb-def} and $\nonzero$ in Definition \ref{def:nonzero}.
\end{proposition}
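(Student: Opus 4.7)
The plan is to derive this result as an application of the multivariate delta method to the joint central limit theorem of Theorem~\ref{th:clt}. The target function is $g(x,x') \eqdef x'/x$, which is continuously differentiable in a neighborhood of $(\mean{\Xa},\mean{\Xb})$, because Assumption A\ref{hypA:non-negative} guarantees $\mean{\Xa}>0$.

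First, I would extract from Theorem~\ref{th:clt} the marginal joint CLT for the pair $(\sumXtildea,\sumXtildeb)$ by simply dropping the second and fourth coordinates. This gives the $2\times 2$ asymptotic covariance matrix
\begin{equation*}
\Sigma_X \eqdef
\begin{pmatrix}
\stddevXtildea^2 & -\mean{\Xa}\mean{\Xb} \\
-\mean{\Xa}\mean{\Xb} & \stddevXtildeb^2
\end{pmatrix}\eqsp,
\end{equation*}
whose entries come directly from \eqref{eq:varxtildea-def}, \eqref{eq:varxtildeb-def}, and the cross-population covariance formula in Theorem~\ref{th:clt}.

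Second, I would handle the technicality that $\sumXtildea$ may be zero on some (low-probability) events. The paper already indicates that Lemma~\ref{lem:nonzero-probaconv} in Appendix~\ref{appendix:proofs} permits replacing $\sumXtildea$ by $\nonzero[\sumXtildea]$ without affecting the limiting distribution: since $\sumXtildea \plim \mean{\Xa}>0$, the event $\{\sumXtildea=0\}$ has vanishing probability, so $\sqrt{\numuid}(\sumXtildeb/\nonzero[\sumXtildea]-\sumXtildeb/\sumXtildea)$ converges to $0$ in probability. This step is routine and is the only reason the $\nonzero$ operator needs to appear in the statement.

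Third, I would apply the delta method. The gradient of $g$ at $(\mean{\Xa},\mean{\Xb})$ is
\begin{equation*}
\nabla g(\mean{\Xa},\mean{\Xb}) = \left(-\dfrac{\mean{\Xb}}{\mean{\Xa}^2}\eqsp,\eqsp \dfrac{1}{\mean{\Xa}}\right)^\top\eqsp,
\end{equation*}
so the asymptotic variance is $\nabla g^\top \Sigma_X \nabla g$. Expanding,
\begin{equation*}
\nabla g^\top \Sigma_X \nabla g
= \dfrac{\mean{\Xb}^2}{\mean{\Xa}^4}\stddevXtildea^2 + \dfrac{\stddevXtildeb^2}{\mean{\Xa}^2} + 2\cdot\dfrac{\mean{\Xb}}{\mean{\Xa}^3}\cdot\mean{\Xa}\mean{\Xb}\eqsp,
\end{equation*}
where the cross term is positive because the two $(-)$ signs (from the gradient's first component and from the negative covariance $-\mean{\Xa}\mean{\Xb}$) cancel. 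Factoring out $(\mean{\Xb}/\mean{\Xa})^2$ recovers exactly the announced variance.

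I don't expect any deep obstacle here: the result is a straightforward delta-method corollary of Theorem~\ref{th:clt}. The only genuinely non-trivial observation is that the sign pattern of the cross-population covariance in Theorem~\ref{th:clt} makes the bracket $[\,(\stddevXtildea/\mean{\Xa})^2+(\stddevXtildeb/\mean{\Xb})^2+2\,]$ strictly larger than in the independent-populations case, which is the AB-test-specific contribution worth highlighting in the proof.
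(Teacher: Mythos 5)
Your proposal is correct, and in substance it is the same argument as the paper's: the variance computation checks out (your gradient and the $2\times 2$ block with cross term $-\mean{\Xa}\mean{\Xb}$ reproduce the announced variance exactly). The paper does not invoke the delta method by name but proves it ad hoc: its Corollary \ref{cor:ratio-clt} (built from Proposition \ref{prop:multiratio-clt}) rewrites $\sumXtildeb/\nonzero[\sumXtildea]-\mean{\Xb}/\mean{\Xa}$ as a random linear combination of $\sumXtildeb-\mean{\Xb}$ and $\nonzero[\sumXtildea]-\mean{\Xa}$ whose coefficients converge in probability to your gradient, and concludes by Slutsky, after verifying exactly your list of hypotheses ($\mean{\Xa}>0$ via Lemma \ref{lem:positive-means}, the law of large numbers, and the joint CLT from Theorem \ref{th:clt}). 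The one genuine difference is the zero-denominator technicality: you use only $\proba{\sumXtildea=0}\to 0$, since the two ratio statistics coincide off that event, whereas the paper's Lemma \ref{lem:nonzero-clt} pushes the perturbation through a bounded-Lipschitz portmanteau bound and therefore must beat a factor $\sqrt{\numuid}$, which is why it needs the exponential estimate $\proba{\sumXtildea=0}\leq c^{\numuid}$ of Lemma \ref{lem:proba-sum-0}. Your route is lighter and equally valid; the only refinement worth making is in the phrasing, because $\sumXtildeb/\sumXtildea$ is undefined on $\{\sumXtildea=0\}$: rather than saying the difference of the two quantities tends to zero in probability, say that $\sumXtildeb/\nonzero[\sumXtildea]$ agrees with the ratio you apply the delta method to except on an event of probability tending to zero, which preserves the limiting distribution.
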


Following similar steps, we can now derive central limit theorems for ratio of the form $\sumXtildea/\sumYtildea$ which allows us to get confidence intervals for metrics such as CTR as in Example \ref{ex:ctr}.

\begin{proposition}\label{prop:clt-ratio}
Under Assumptions A\ref{hypA:iid}-\ref{hypA:epsilon-law}, the ratio $\sumXtildea/\nonzero[\sumYtildea]$ and $\sumXtildeb/\nonzero[\sumYtildeb]$ satisfy the following central limit theorem
\begin{equation*}
\sqrt{\numuid}\left(\begin{array}{c}
\dfrac{\sumXtildea}{\nonzero[\sumYtildea]} - \dfrac{\mean{\Xa}}{\mean{\Ya}} \\
\dfrac{\sumXtildeb}{\nonzero[\sumYtildeb]} - \dfrac{\mean{\Xb}}{\mean{\Yb}}
\end{array}\right)
 \dlim \normallaw{0}{\left(\begin{array}{cc} V_{\refpop} & 0 \\ 0 & V_{\testpop} \end{array}\right)}\eqsp,
\end{equation*}
with
\begin{align}
\label{eq:var-ratio-a}
&V_{\refpop} \eqdef \left(\dfrac{\mean{\Xa}}{\mean{\Ya}}\right)^2 \left[ \left(\dfrac{\stddevXtildea}{\mean{\Xa}}\right)^2 + \left(\dfrac{\stddevYtildea}{\mean{\Ya}}\right)^2 - 2 \correltildea \dfrac{\stddevXtildea}{\mean{\Xa}}\dfrac{\stddevYtildea}{\mean{\Ya}} \right] \eqsp,\\
\label{eq:var-ratio-b}
&V_{\testpop} \eqdef \left(\dfrac{\mean{\Xb}}{\mean{\Yb}}\right)^2 \left[ \left(\dfrac{\stddevXtildeb}{\mean{\Xb}}\right)^2 + \left(\dfrac{\stddevYtildeb}{\mean{\Yb}}\right)^2 - 2 \correltildeb \dfrac{\stddevXtildeb}{\mean{\Xb}}\dfrac{\stddevYtildeb}{\mean{\Yb}} \right] \eqsp,
\end{align}
where $\stddevXtildea$, $\stddevYtildea$, $\stddevXtildeb$, $\stddevYtildeb$, $\correltildea$ and $\correltildeb$ are respectively defined in \eqref{eq:varxtildea-def}, \eqref{eq:varytildea-def}, \eqref{eq:varxtildeb-def}, \eqref{eq:varytildeb-def},  \eqref{eq:rhotildea-def} and \eqref{eq:rhotildeb-def}.
\end{proposition}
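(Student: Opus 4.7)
The plan is to deduce this from Theorem \ref{th:clt} by the multivariate delta method applied to the smooth map $g: \rset^2 \times \rset^*\times \rset^* \to \rset^2$ defined by $g(x,y,x',y') = (x/y,\; x'/y')$. Assumption A\ref{hypA:non-negative} guarantees that $\mean{\Ya}, \mean{\Yb} > 0$, so $g$ is $C^1$ on a neighbourhood of $(\mean{\Xa},\mean{\Ya},\mean{\Xb},\mean{\Yb})$, and the technical substitution of $\sumYtildea, \sumYtildeb$ by $\nonzero[\sumYtildea], \nonzero[\sumYtildeb]$ is handled using Lemma \ref{lem:nonzero-probaconv} as already announced in Section \ref{subsec:bootstrap}.

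The Jacobian of $g$ at $(\mean{\Xa},\mean{\Ya},\mean{\Xb},\mean{\Yb})$ reads
\begin{equation*}
J = \begin{pmatrix} 1/\mean{\Ya} & -\mean{\Xa}/\mean{\Ya}^2 & 0 & 0 \\ 0 & 0 & 1/\mean{\Yb} & -\mean{\Xb}/\mean{\Yb}^2 \end{pmatrix}\eqsp,
\end{equation*}
so the asymptotic covariance matrix of the left-hand side is $J\,\Sigma\,J^\top$, where $\Sigma = \Sigma(\Xtildea[1],\Ytildea[1],\Xtildeb[1],\Ytildeb[1])$ is the covariance matrix provided by Theorem \ref{th:clt}. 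I then just have to read off its three entries. The diagonal entry $(1,1)$ equals
\begin{equation*}
\frac{\stddevXtildea^2}{\mean{\Ya}^2} - 2\,\frac{\mean{\Xa}}{\mean{\Ya}^3}\,\correltildea\stddevXtildea\stddevYtildea + \frac{\mean{\Xa}^2}{\mean{\Ya}^4}\,\stddevYtildea^2\eqsp,
\end{equation*}
and factoring $(\mean{\Xa}/\mean{\Ya})^2$ yields exactly $V_{\refpop}$ from \eqref{eq:var-ratio-a}; the symmetric computation gives $V_{\testpop}$ in the $(2,2)$ entry.

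The key point of the statement — and the only step that is not entirely routine — is the vanishing of the off-diagonal entry $(1,2)$ of $J\Sigma J^\top$, i.e.\ the asymptotic independence of the two ratios. Expanding, this entry equals
\begin{equation*}
\tfrac{1}{\mean{\Ya}\mean{\Yb}}\cov{\Xtildea[1]}{\Xtildeb[1]} - \tfrac{\mean{\Xb}}{\mean{\Ya}\mean{\Yb}^2}\cov{\Xtildea[1]}{\Ytildeb[1]} - \tfrac{\mean{\Xa}}{\mean{\Ya}^2\mean{\Yb}}\cov{\Ytildea[1]}{\Xtildeb[1]} + \tfrac{\mean{\Xa}\mean{\Xb}}{\mean{\Ya}^2\mean{\Yb}^2}\cov{\Ytildea[1]}{\Ytildeb[1]}\eqsp.
\end{equation*}
Plugging in the cross-population covariances $-\mean{\Xa}\mean{\Xb}$, $-\mean{\Xa}\mean{\Yb}$, $-\mean{\Ya}\mean{\Xb}$, $-\mean{\Ya}\mean{\Yb}$ from Theorem \ref{th:clt}, every term reduces to $\pm\,\mean{\Xa}\mean{\Xb}/(\mean{\Ya}\mean{\Yb})$, with signs $-,+,+,-$, hence the sum is zero. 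Thus the limit covariance matrix is block-diagonal, which finishes the proof. The obstacle to watch for is just bookkeeping: one needs the precise form of the four cross-population covariances — each being the negative of a product of means — as this is exactly what makes the off-diagonal block cancel; no further probabilistic input beyond Theorem \ref{th:clt} and the delta method is required.
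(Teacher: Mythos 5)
Your route is essentially the paper's: the paper proves this proposition by applying its Proposition \ref{prop:multiratio-clt}, which is exactly the two-ratio delta method you describe (its matrix $P$ is your $J^\top$, and the limit covariance is $P^T\Sigma P = J\Sigma J^\top$), fed with Theorem \ref{th:clt}, Lemma \ref{lem:positive-means} for the positivity of the means, and Lemma \ref{lem:proba-sum-0} for the denominators. Your entry-by-entry computation of $J\Sigma J^\top$ — the factorization of the diagonal entries into $V_{\refpop}$, $V_{\testpop}$ and the $-,+,+,-$ cancellation of the off-diagonal block using the cross-population covariances $-\mean{\Xa}\mean{\Xb}$, $-\mean{\Xa}\mean{\Yb}$, $-\mean{\Ya}\mean{\Xb}$, $-\mean{\Ya}\mean{\Yb}$ — is correct, and it usefully makes explicit an algebraic step the paper leaves to the reader.

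The one gap is your treatment of the substitution of $\sumYtildea,\sumYtildeb$ by $\nonzero[\sumYtildea],\nonzero[\sumYtildeb]$. Lemma \ref{lem:nonzero-probaconv} only gives $\nonzero[\sumYtildea]\plim\mean{\Ya}$; it says nothing about convergence in distribution at the $\sqrt{\numuid}$ scale, and the map $g(x,y,x',y')=(x/y,x'/y')$ is not even defined on the event $\{\sumYtildea=0\}\cup\{\sumYtildeb=0\}$, so the generic delta method does not directly apply to the estimator appearing in the statement. The paper closes exactly this hole with two dedicated results: Lemma \ref{lem:proba-sum-0} (using A\ref{hypA:abtest-independence}, A\ref{hypA:non-negative}, A\ref{hypA:epsilon-law}) gives $\proba{\sumYtildea=0}\leq c^{\numuid}$ with $c<1$, and Lemma \ref{lem:nonzero-clt} then shows the joint CLT survives the replacement of $\sumYtildea$ by $\nonzero[\sumYtildea]$, since the bounded-Lipschitz error is of order $\sqrt{\numuid}\,\proba{\sumYtildea=0}\leq\sqrt{\numuid}\,c^{\numuid}\to0$. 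Your argument can be patched more cheaply — note that $\sqrt{\numuid}\left(\nonzero[\sumYtildea]-\sumYtildea\right)=\sqrt{\numuid}\,\one_{\sumYtildea=0}$, and $\proba{\sqrt{\numuid}\,\one_{\sumYtildea=0}>\varepsilon}\leq\proba{\sumYtildea=0}\to0$, so Slutsky transfers the CLT to the $\nonzero$ versions — but as written, invoking Lemma \ref{lem:nonzero-probaconv} alone does not justify the step, and contrary to your closing remark, this (not only the off-diagonal bookkeeping) is a non-routine part of the paper's proof, requiring Assumptions A\ref{hypA:non-negative} and A\ref{hypA:epsilon-law} beyond Theorem \ref{th:clt}.
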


One can remark that whereas $\sumXtildea$ and $\sumXtildeb$ are asymptotically correlated, as well as $\sumYtildea$ and $\sumYtildeb$, the ratio $\sumXtildea/\nonzero[\sumYtildea]$ and $\sumXtildeb/\nonzero[\sumYtildeb]$ are not. This can be explained by recalling that the correlation of the non-ratio metrics is due to the fact that adding a user to one sum, excludes him from the other one, resulting in a negative correlation. On the contrary, ratios inside each population are independent of the scale of the individual sums, and their correlation vanishes asymptotically.

We can now derive central limit theorems for both the absolute and relative differences of ratios. This is done in Propositions \ref{prop:clt-ratio-absolute-diff} and \ref{prop:clt-ratio-relative-diff} respectively.

\begin{proposition}[CLT for $f(x,y,x',y') = x'/y' - x/y$]
\label{prop:clt-ratio-absolute-diff}
Under Assumptions A\ref{hypA:iid}-\ref{hypA:epsilon-law}, the ratios $\sumXtildea/\nonzero[\sumYtildea]$ and $\sumXtildeb/\nonzero[\sumYtildeb]$ satisfy the following central limit theorem
\begin{equation*}
\sqrt{\numuid}\left[
\left(\dfrac{\sumXtildeb}{\nonzero[\sumYtildeb]} - \dfrac{\sumXtildea}{\nonzero[\sumYtildea]}\right) - \left(\dfrac{\mean{\Xb}}{\mean{\Yb}} - \dfrac{\mean{\Xa}}{\mean{\Ya}}\right)
\right]
 \dlim \normallaw{0}{V_{\refpop}+V_{\testpop}}\eqsp,
\end{equation*}
where $V_{\refpop}$ and $V_{\testpop}$ are defined respectively in \eqref{eq:var-ratio-a} and \eqref{eq:var-ratio-b}.
\end{proposition}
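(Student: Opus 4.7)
The plan is to obtain this result as a direct corollary of Proposition \ref{prop:clt-ratio} by applying a trivial linear transformation. Since Proposition \ref{prop:clt-ratio} already provides the joint central limit theorem for the two ratios $\sumXtildea/\nonzero[\sumYtildea]$ and $\sumXtildeb/\nonzero[\sumYtildeb]$ with limiting covariance matrix $\mathrm{diag}(V_{\refpop}, V_{\testpop})$, all that is required is to push forward this convergence through the linear map $g:\rset^2 \to \rset$ defined by $g(u,v) = v - u$.

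Concretely, first I would write
\begin{equation*}
\sqrt{\numuid}\left[\left(\dfrac{\sumXtildeb}{\nonzero[\sumYtildeb]} - \dfrac{\sumXtildea}{\nonzero[\sumYtildea]}\right) - \left(\dfrac{\mean{\Xb}}{\mean{\Yb}} - \dfrac{\mean{\Xa}}{\mean{\Ya}}\right)\right]
= \langle (-1,1), W_{\numuid}\rangle,
\end{equation*}
where $W_{\numuid}$ denotes the two-dimensional vector appearing on the left-hand side of Proposition \ref{prop:clt-ratio}. Then, by the continuous mapping theorem (or equivalently by the fact that linear maps preserve weak convergence to a Gaussian), $\langle (-1,1), W_{\numuid}\rangle$ converges in distribution to $\langle (-1,1), Z\rangle$, where $Z \sim \normallaw{0}{\mathrm{diag}(V_{\refpop},V_{\testpop})}$. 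The latter is a centered normal with variance
\begin{equation*}
(-1,1)\,\mathrm{diag}(V_{\refpop}, V_{\testpop})\,(-1,1)^\top = V_{\refpop} + V_{\testpop},
\end{equation*}
which is the announced limiting law. The asymptotic independence between the two ratios established in Proposition \ref{prop:clt-ratio} is precisely what makes the cross term disappear and leaves only the sum $V_{\refpop}+V_{\testpop}$.

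There is essentially no obstacle here: the hard analytic work (multivariate CLT for the four sums, delta-method on two ratios simultaneously, and careful handling of the $\nonzero$ truncation via Lemma \ref{lem:nonzero-probaconv}) has already been absorbed into Proposition \ref{prop:clt-ratio}. The only point worth mentioning in the proof is to invoke Lemma \ref{lem:nonzero-probaconv} once more if one wants to replace $\sumYtildea$ and $\sumYtildeb$ by $\nonzero[\sumYtildea]$ and $\nonzero[\sumYtildeb]$ at intermediate steps, but this is routine since by Assumption A\ref{hypA:non-negative} the denominators $\mean{\Ya}$ and $\mean{\Yb}$ are strictly positive.
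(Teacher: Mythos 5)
Your proof is correct and is essentially the paper's own argument: the paper proves this proposition by noting it "follows the same steps" as Proposition \ref{prop:clt-diff}, i.e.\ applying the continuous mapping theorem with the linear map $(u,v)\mapsto v-u$ to the joint CLT of Proposition \ref{prop:clt-ratio}, whose diagonal limiting covariance yields the variance $V_{\refpop}+V_{\testpop}$ exactly as you computed.
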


\begin{proposition}[CLT for $f(x,y,x',y') = \frac{x'/y'}{x/y}$]
\label{prop:clt-ratio-relative-diff}
Under Assumptions A\ref{hypA:iid}-\ref{hypA:epsilon-law}, we have the following central limit theorem
\begin{multline*}
\sqrt{\numuid}\left(
\dfrac{\sumXtildeb/\nonzero[\sumYtildeb]}{\nonzero[\sumXtildea]/\nonzero[\sumYtildea]} - \dfrac{\mean{\Xb}/\mean{\Yb}}{\mean{\Xa}/\mean{\Ya}}
\right)\\
 \dlim \normallaw{0}{\left(\dfrac{\mean{\Xb}/\mean{\Yb}}{\mean{\Xa}/\mean{\Ya}}\right)^2 \left[ \left(\dfrac{\sqrt{V_{\refpop}}}{\mean{\Xa}/\mean{\Ya}}\right)^2 + \left(\dfrac{\sqrt{V_{\testpop}}}{\mean{\Xb}/\mean{\Yb}}\right)^2 \right]}\eqsp,
\end{multline*}
where $V_{\refpop}$ and $V_{\testpop}$ are defined in Proposition \ref{prop:clt-ratio}.
\end{proposition}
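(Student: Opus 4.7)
The plan is to apply the delta method to the joint CLT already established in Proposition \ref{prop:clt-ratio}. Write $R_{\refpop} \eqdef \sumXtildea/\nonzero[\sumYtildea]$, $R_{\testpop} \eqdef \sumXtildeb/\nonzero[\sumYtildeb]$, $r_{\refpop} \eqdef \mean{\Xa}/\mean{\Ya}$ and $r_{\testpop} \eqdef \mean{\Xb}/\mean{\Yb}$. Proposition \ref{prop:clt-ratio} states that
\begin{equation*}
\sqrt{\numuid}\begin{pmatrix} R_{\refpop} - r_{\refpop} \\ R_{\testpop} - r_{\testpop} \end{pmatrix} \dlim \normallaw{0}{\begin{pmatrix} V_{\refpop} & 0 \\ 0 & V_{\testpop} \end{pmatrix}}\eqsp,
\end{equation*}
and by Assumption A\ref{hypA:non-negative} both $r_{\refpop}$ and $r_{\testpop}$ are positive, so in particular $r_{\refpop} \neq 0$. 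The quantity of interest $\left(\sumXtildeb/\nonzero[\sumYtildeb]\right)/\left(\nonzero[\sumXtildea]/\nonzero[\sumYtildea]\right)$ equals $g(R_{\refpop}, R_{\testpop})$ for $g(u,v) \eqdef v/\nonzero[u]$, which coincides with $v/u$ on a neighborhood of $(r_{\refpop}, r_{\testpop})$ (here I invoke Lemma \ref{lem:nonzero-probaconv} to justify replacing $\nonzero[\sumXtildea]$ by $\sumXtildea$ asymptotically, since $\sumXtildea \plim \mean{\Xa} > 0$).

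Next I apply the multivariate delta method to $g(u,v) = v/u$, whose gradient at $(r_{\refpop}, r_{\testpop})$ is
\begin{equation*}
\nabla g(r_{\refpop}, r_{\testpop}) = \left(-\dfrac{r_{\testpop}}{r_{\refpop}^2}, \dfrac{1}{r_{\refpop}}\right)\eqsp.
\end{equation*}
Because the limiting covariance matrix in Proposition \ref{prop:clt-ratio} is diagonal, the asymptotic variance simplifies to
\begin{equation*}
\left(\dfrac{r_{\testpop}}{r_{\refpop}^2}\right)^2 V_{\refpop} + \left(\dfrac{1}{r_{\refpop}}\right)^2 V_{\testpop} = \left(\dfrac{r_{\testpop}}{r_{\refpop}}\right)^2 \left[\dfrac{V_{\refpop}}{r_{\refpop}^2} + \dfrac{V_{\testpop}}{r_{\testpop}^2}\right]\eqsp,
\end{equation*}
which, after substituting back $r_{\refpop} = \mean{\Xa}/\mean{\Ya}$ and $r_{\testpop} = \mean{\Xb}/\mean{\Yb}$, is exactly the variance in the statement.

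The only non-routine step is handling the $\nonzero$ operator cleanly: the functional $g$ is not globally $C^1$ on $\rset^2$, so I need to argue that with probability tending to one $(\sumXtildea, \sumYtildea, \sumXtildeb, \sumYtildeb)$ lies in an open neighborhood of $(\mean{\Xa}, \mean{\Ya}, \mean{\Xb}, \mean{\Yb})$ on which all denominators stay bounded away from zero; on this event $g$ agrees with the smooth map $(u,v) \mapsto v/u$, so the standard delta method applies. Everything else (gradient computation, diagonal covariance, algebraic repackaging of the variance) is routine.
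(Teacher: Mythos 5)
Your proposal is correct, and its core is the same as the paper's: both arguments start from the joint CLT of Proposition \ref{prop:clt-ratio} with its diagonal limiting covariance and reduce the relative increment to a delta-method computation for $g(u,v)=v/u$, yielding exactly the stated variance. Where you differ is in how the $\nonzero$ truncation is disposed of. The paper does not redo the delta method by hand; it invokes its Corollary \ref{cor:ratio-clt} with $X_n=\sumXtildeb/\nonzero[\sumYtildeb]$ and $Y_n=\sumXtildea/\nonzero[\sumYtildea]$, and that corollary (through Lemma \ref{lem:nonzero-clt}) requires the geometric bound $\proba{Y_n=0}\leq c^{\numuid}$, which is checked via $\proba{\sumXtildea/\nonzero[\sumYtildea]=0}=\proba{\sumXtildea=0}$ and Lemma \ref{lem:proba-sum-0}. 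Your route instead notes that the statistic agrees with the smooth map $v/u$ evaluated at the two population ratios on an event whose probability tends to one, and closes with Slutsky; this is lighter, since it only needs $\proba{\sumXtildea=0}\to 0$ (which follows from $\sumXtildea\plim\mean{\Xa}>0$) rather than an exponential rate—the rate matters in the paper only because Lemma \ref{lem:nonzero-clt} bounds an expectation of a Lipschitz test function against a factor $\sqrt{\numuid}$. Two minor points to tighten: positivity of the means is not literally Assumption A\ref{hypA:non-negative} but its consequence recorded in Lemma \ref{lem:positive-means}, and citing Lemma \ref{lem:nonzero-probaconv} to ``replace $\nonzero[\sumXtildea]$ by $\sumXtildea$'' is not the operative step—the probability-tending-to-one agreement you spell out at the end is what actually does that work.
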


\subsection{Variance Estimation}
Algorithm \ref{alg:clt-ci} defined in Section \ref{subsec:bootstrap} relies on estimators of the asymptotic variance given in Propositions \ref{prop:clt-diff}, \ref{prop:clt-relative-diff}, \ref{prop:clt-ratio-absolute-diff}, and \ref{prop:clt-ratio-relative-diff}. All the related variances are given as a continuous function of $\mean{\Xa}$, $\mean{\Ya}$, $\mean{\Xb}$, $\mean{\Yb}$ defined in \eqref{eq:mean-def}, and  $\stddevXtildea$, $\stddevYtildea$, $\stddevXtildeb$, $\stddevYtildeb$, $\correltildea$, $\correltildeb$ respectively defined in \eqref{eq:varxtildea-def}, \eqref{eq:varytildea-def}, \eqref{eq:varxtildeb-def}, \eqref{eq:varytildeb-def},  \eqref{eq:rhotildea-def} and \eqref{eq:rhotildeb-def}. According to the continuous mapping theorem, we thus only need to get consistent estimators $\estimator{\mean{\Xa}}$, $\estimator{\mean{\Ya}}$, $\estimator{\mean{\Xb}}$, $\estimator{\mean{\Yb}}$, $\estimator{\stddevXtildea}$, $\estimator{\stddevYtildea}$, $\estimator{\stddevXtildeb}$, $\estimator{\stddevYtildeb}$, $\estimator{\correltildea}$, $\estimator{\correltildeb}$ of these ten quantities to derive consistent estimators of the asymptotic variances stated in Propositions \ref{prop:clt-diff}, \ref{prop:clt-relative-diff}, \ref{prop:clt-ratio-absolute-diff}, and \ref{prop:clt-ratio-relative-diff}.

The mean estimators are easily obtained from $\sumXtildea$, $\sumYtildea$, $\sumXtildeb$, and $\sumYtildeb$:
\begin{equation*}
\estimator{\mean{\Xa}} \eqdef \sumXtildea \eqsp, \quad
\estimator{\mean{\Ya}} \eqdef \sumYtildea \eqsp, \quad
\estimator{\mean{\Xb}} \eqdef \sumXtildeb \eqsp, \quad
\estimator{\mean{\Yb}} \eqdef \sumYtildeb \eqsp.
\end{equation*}

The variance estimators can be computed directly from the random variables $(\Xtildea[i], \Ytildea[i], \Xtildeb[i], \Ytildeb[i])_{i \geq 1}$ without estimating in a first step  $\stddev{\Xa}$, $\stddev{\Ya}$, $\stddev{\Xb}$, $\stddev{\Yb}$:
\begin{align*}
&\estimator{\stddevXtildea}^2 \eqdef \dfrac{1}{\numuid-1} \sum_{i=1}^{\numuid} \left(\Xtildea[i]-\estimator{\mean{\Xa}}\right)^2\eqsp, &\estimator{\stddevYtildea}^2 \eqdef \dfrac{1}{\numuid-1} \sum_{i=1}^{\numuid} \left(\Ytildea[i]-\estimator{\mean{\Ya}}\right)^2\eqsp,\\
&\estimator{\stddevXtildeb}^2 \eqdef \dfrac{1}{\numuid-1} \sum_{i=1}^{\numuid} \left(\Xtildeb[i]-\estimator{\mean{\Xb}}\right)^2\eqsp, &\estimator{\stddevYtildeb}^2 \eqdef \dfrac{1}{\numuid-1} \sum_{i=1}^{\numuid} \left(\Ytildeb[i]-\estimator{\mean{\Yb}}\right)^2\eqsp.
\end{align*}

Finally, the correlation estimators are obtained in a similar way:
\begin{align*}
&\estimator{\correltildea} \eqdef \dfrac{1}{\estimator{\stddevXtildea}\estimator{\stddevYtildea}} \times \dfrac{1}{\numuid-1} \sum_{i=1}^{\numuid} \left(\Xtildea[i]-\estimator{\mean{\Xa}}\right)\left(\Ytildea[i]-\estimator{\mean{\Ya}}\right) \eqsp,\\
&\estimator{\correltildeb} \eqdef \dfrac{1}{\estimator{\stddevXtildeb}\estimator{\stddevYtildeb}} \times \dfrac{1}{\numuid-1} \sum_{i=1}^{\numuid} \left(\Xtildeb[i]-\estimator{\mean{\Xb}}\right)\left(\Ytildeb[i]-\estimator{\mean{\Yb}}\right) \eqsp.
\end{align*}

\section{Numerical Application to CTR Confidence Intervals}
\label{sec:experiments}
We use a real dataset described in Section \ref{subsec:dataset} to numerically demonstrate the proposed algorithms. Blank AB-tests are simulated over this dataset to validate the user independent assumption in Section \ref{subsec:assumption-validation} and to compare the bootstrap algorithms in Section \ref{subsec:algo-comparison}. Blank AB-tests are of particular interest here since we know that whichever the metric of interest, its increment should be $0$. This allows to easily check that a given confidence interval contains the true value it aims to estimate.

\subsection{Dataset Description}
\label{subsec:dataset}
The dataset used in this paper is publically accessible from the KDD Cup website \citet{kdd:2012}.
It has been built out of search session log messages containing one line per search. Each line provides the user id, the number of displays and the number of clicks associated to the current search session. Other information are available in the dataset but are not relevant for this study. The lines are not grouped by user and the same user can be found in different and separate search sessions. Due to the large number of simulations run in this section, we kept only the first 1 million users out of 22 million, sorted by lexicographic order on the user id. An extract of the dataset is shown in Table \ref{table:dataset-sample} and some statistics are available in Table \ref{table:dataset-stats}. Furthermore, the distribution of the number of clicks per user (knowing the user has clicked at least once) is displayed in Figure \ref{fig:nbclicks-per-user}. It illustrates the fact that this number of clicks cannot be approximated by a Bernoulli law.
\begin{table}[h]
\begin{center}
\begin{tabular}{||ccc||}
\hline \hline
UserId &	NbDisplays 	&	NbClicks \\
\hline \hline
10000244&1&0\\
10000148&3&1\\
10000089&1&0\\
1000026&6&0\\
1000002&1&0\\
1000002&1&0\\
10000315&1&0\\
10000925&3&2\\
10000185&1&0\\
\hline \hline
\end{tabular}
\caption{Dataset sample}
\label{table:dataset-sample}
\end{center}
\end{table}
\begin{figure}
\begin{center}
\includegraphics[width=\textwidth]{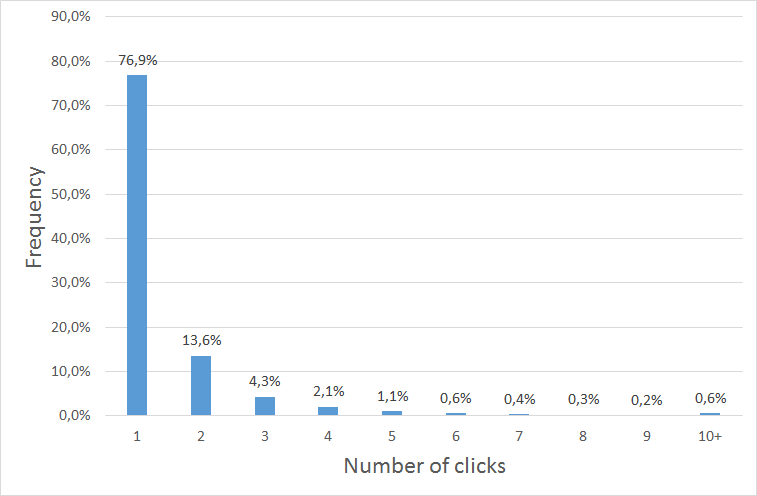}
\end{center}
\caption{Number of clicks per users with at least one click}
\label{fig:nbclicks-per-user}
\end{figure}

In the next Sections, blank AB-tests will be simulated from this dataset to compare the CTR (Click-Through Rate) of each population, defined as the average number of clicks per display:
$$\mbox{CTR} \eqdef \dfrac{\mbox{NbClicks}}{\mbox{NbDisplays}}\eqsp.$$
\begin{table}[h]
\begin{center}
\begin{tabular}{||ll||ll||}
\hline \hline
Number of users		&	$1000000$	&	CTR	&	$4.4\%$	\\
Number of displays	&	$4332627$	&	Displays per user	&	$4.3$	\\
Number of clicks		&	$191892$	&	Clicks per user	&	$0.19$	\\
\hline \hline
\end{tabular}
\caption{Dataset statistics}
\label{table:dataset-stats}
\end{center}
\end{table}

\subsection{User Independence Assumption}
\label{subsec:assumption-validation}
In order to validate Assumption A\ref{hypA:iid} and to show that it cannot be approximated by an independence of the displays, $500$ blank AB-tests were simulated\footnote{Experiments have also been made for $300$ and $400$ blank AB-tests and the results were very similar.}. For each AB-test, confidence intervals at different levels (from $50\%$ to $99\%$) were computed for the absolute CTR increment $\mbox{CTR}_{\testpop} - \mbox{CTR}_{\refpop}$ using two methods. The first one assumes that the \emph{displays} are independent implying an asymptotic variance of
$$\dfrac{\mbox{CTR}_{\refpop}(1-\mbox{CTR}_{\refpop})}{\mbox{NbDisplays}_{\refpop}} + \dfrac{\mbox{CTR}_{\testpop}(1-\mbox{CTR}_{\testpop})}{\mbox{NbDisplays}_{\testpop}} \eqsp.$$
This is the formula usually given when describing AB-test analysis. The second method assumes that the \emph{users} are independent and is described in Algorithm \ref{alg:clt-ci}. If the variables $(\Xa[i], \Ya[i], \Xb[i], \Yb[i])_{i\geq 1}$ model  the following quantities
\begin{itemize}
\item $\Xa[i]$: number of clicks from user $i$ if system $\refpop$ is applied,
\item $\Ya[i]$: number of displays shown to user $i$ if system $\refpop$ is applied,
\item $\Xb[i]$: number of clicks from user $i$ if system $\testpop$ is applied,
\item $\Yb[i]$: number of displays shown to user $i$ if system $\testpop$ is applied,
\end{itemize}
then the CTR of each population can be written
$$\mbox{CTR}_{\refpop} = \dfrac{\sumXtildea}{\sumYtildea} \eqsp, \quad \mbox{CTR}_{\testpop} = \dfrac{\sumXtildeb}{\sumYtildeb} \eqsp,$$
and the asymptotic variance of $\mbox{CTR}_{\testpop} - \mbox{CTR}_{\refpop}$ is given in Proposition \ref{prop:clt-ratio-absolute-diff}.

The true value of the absolute increment is known to be $0$ and, for each confidence level, we give the percentage of AB-tests for which the confidence interval contained $0$. The closer this percentage to the target confidence level, the better the underlying method. Results for both assumptions are shown in Figure \ref{fig:display-vs-user-iid}.
\begin{figure}
\begin{center}
\includegraphics[width=\textwidth]{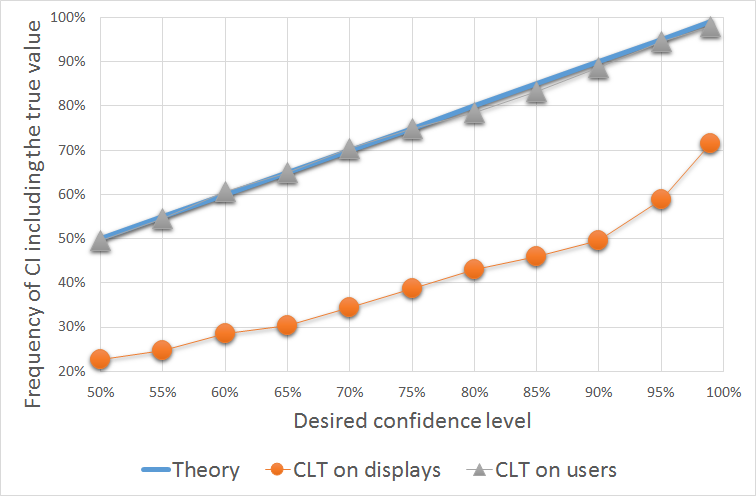}
\end{center}
\caption{Display VS User independence assumption}
\label{fig:display-vs-user-iid}
\end{figure}
Assuming independence of displays leads to under-estimating the AB-test noise, and increments appear significant much more often than they should be. For example, a $95\%$-confidence interval includes the true value in only $59\%$ of AB-tests which contradicts the definition of a confidence interval. On the contrary, the assumption of user independence leads to the expected conclusion of having almost $95\%$ of $95\%$-confidence intervals including $0$ and it remains true for all other tested levels.

This under-estimation is explicitly illustrated in Figure \ref{fig:nbclicks-distribution} where the empirical distribution of the number of clicks (obtained by bootstrapping) is compared to the binomial distribution implied by the display independence assumption.
\begin{figure}
\begin{center}
\includegraphics[width=\textwidth]{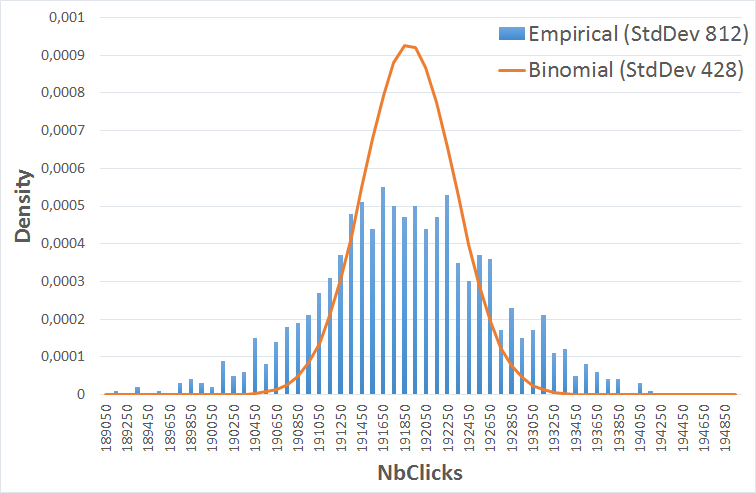}
\end{center}
\caption{Empirical vs binomial number of clicks distribution}
\label{fig:nbclicks-distribution}
\end{figure}
It shows that the empirical standard deviation is much higher than the binomial one (twice as big iN this example).

\subsection{Comparison of Bootstrap Algorithms}
\label{subsec:algo-comparison}
The assumption of independence by user having been validated, we can now focus on the comparison of the proposed algorithms. The method using only the central limit theorem will be given as a reference but is not of practical interest here as the dataset is not grouped by user (see Section \ref{subsec:bootstrap}). We are thus more interested in comparing Algorithms \ref{alg:ci-bootstrap} and \ref{alg:fast-ci} as they can be implemented in a such a way that the dataset is read only once. Each algorithm uses bootstrapping, having a computational cost linear in the number of bootstraps $M$. Similarly to Section \ref{subsec:assumption-validation}, $500$ blank AB-tests were simulated from the dataset described in \ref{subsec:dataset} to compute confidence intervals for CTR relative increment
$$\dfrac{\mbox{CTR}_{\testpop}}{\mbox{CTR}_{\refpop}}-1 = \dfrac{\sumXtildeb/\sumYtildeb}{\sumXtildea/\sumYtildea} - 1 \eqsp,$$
where $(\Xa[i], \Ya[i], \Xb[i], \Yb[i])_{i\geq 1}$ are defined in Section \ref{subsec:assumption-validation}. According to Proposition \ref{prop:clt-ratio-relative-diff}, this estimator is asymptotically normal and its average should be $0$ for a blank AB-test. The frequency of confidence intervals including the true value $0$ is displayed in Figure \ref{fig:bootstrap-levels} for different levels of confidence and for both the pure bootstrap technique with $M=10$ (Algorithm \ref{alg:ci-bootstrap}) and the technique using the bootstrap variance in the CLT (Algorithm \ref{alg:fast-ci}) again with $M=10$.
\begin{figure}
\begin{center}
\includegraphics[width=\textwidth]{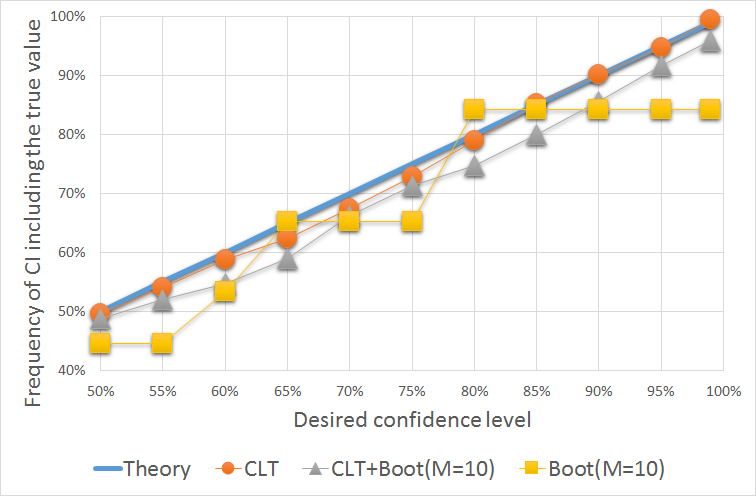}
\end{center}
\caption{Bootstrap algorithms' performance with $M=10$}
\label{fig:bootstrap-levels}
\end{figure}
As expected, for a small number of bootstraps $M=10$, the pure bootstrap algorithm performs poorly and is able to get an acceptable confidence intervals for only a few confidence levels, while the algorithm using both CLT and bootstrapping shows good results for all confidence levels for the same computational cost.
In Figure \ref{fig:bootstrap-number}, we show the influence of the number of bootstraps $M$ in the ability of each algorithm to compute reliable $95\%$ confidence intervals.
\begin{figure}
\begin{center}
\includegraphics[width=\textwidth]{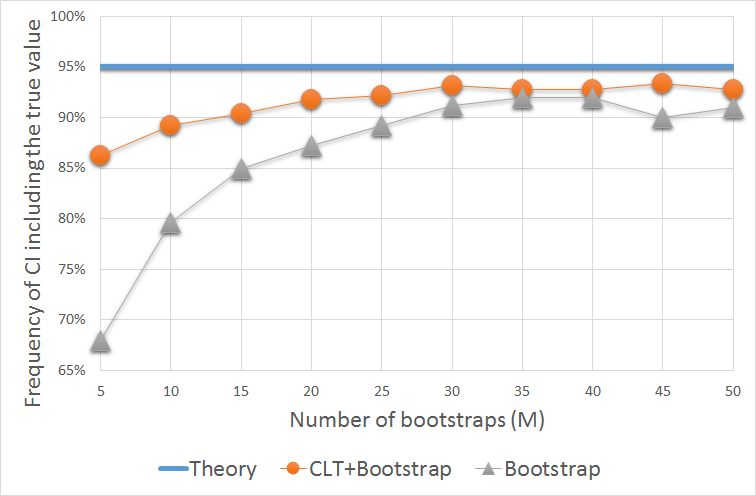}
\end{center}
\caption{Bootstrap algorithms' performance for different values of $M$}
\label{fig:bootstrap-number}
\end{figure}
The pure bootstrap algorithm converges more slowly to the target $95\%$ value and requires twice the computational cost as the mixed algorithm.

\section{Conclusion}
\label{sec:conclusion}
We have translated the AB-test process into a statistical framework, providing three algorithms for the computation of confidence intervals. Each of them are useful for different practical cases:
\begin{enumerate}
\item if the number of users $\numuid$ is small, pure bootstrapping is the best choice (see Algorithm \ref{alg:ci-bootstrap}), and a large number of bootstraps $M$ is tractable;
\item if the number of users $\numuid$ is large and the dataset is grouped by user, then one should use one of the relevant central limit theorems (see Algorithm \ref{alg:clt-ci});
\item if the number of users $\numuid$ is large and the dataset is not grouped by user, the algorithm using the bootstrap variance in the central limit theorem will result in the smallest computational cost (see Algorithm \ref{alg:fast-ci}).
\end{enumerate}
Numerical experiments allowed us to check that our assumptions were valid. We focused on the CTR computation, but, as stated in the theoretical parts, the proposed algorithms apply to any metric that can be written as a sum or a ratio of sums, e.g., to the sales amount spend per user as well as the revenue generated per user. Similar numerical results allowed us to validate the algorithms.

It is worthwhile to note that the provided algorithms lead to results valid only during the AB-test but do not extend to the future. This is known as the \emph{long term effect} as discussed in \citet{kohavi:longbotham:2009}. Addressing this issue would require additional assumptions on the metrics of interest, such as time series modeling, and is out of the scope of this paper.

\acks{The author would like to thank Olivier Chapelle, Alexandre Gilotte, Andrew Kwok and Nicolas Le Roux for their valuable ideas, comments and feedbacks.}

\newpage

\appendix
\section{Convergence Results}
\label{appendix:proofs}
The notations used here are independent from the ones defined in the other sections as the following propositions are general results on random variable convergence. We only keep the definition of $\nonzero$ given in Definition \ref{def:nonzero} that is widely used when dealing with ratios.

All the random variables will be assumed to be defined on a probability space $(\Omega, \mathcal{F}, \mathbb{P})$ and the expectation operator under $\mathbb{P}$ will be denoted by $\esp{\cdot}$.

\begin{lemma}\label{lem:nonzero-probaconv}
Let $(Y_n)_{n \geq 1}$ be a sequence a real random variables converging in probability to a real constant $y \neq 0$. Then the sequence $(\nonzero[Y_n])_{n\geq 1}$ also converges in probability to $y$ where $\nonzero$ is defined in Definition \ref{def:nonzero}.
\end{lemma}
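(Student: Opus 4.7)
The plan is to exploit the fact that $\nonzero$ coincides with the identity map everywhere except at the single point $0$, together with the fact that $y \neq 0$ keeps the limit bounded away from the problematic point. In particular, whenever $Y_n \neq 0$ we have $\nonzero[Y_n] = Y_n$ exactly, so the only discrepancy between $\nonzero[Y_n]$ and $Y_n$ happens on the event $\{Y_n = 0\}$, which itself becomes negligible in probability because $y \neq 0$.

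Concretely, I would fix an arbitrary $\varepsilon > 0$ and bound
\begin{equation*}
\mathbb{P}\left\{\abs{\nonzero[Y_n] - y} \geq \varepsilon\right\}
\leq \mathbb{P}\left\{\abs{\nonzero[Y_n] - y} \geq \varepsilon,\ Y_n \neq 0\right\} + \mathbb{P}\left\{Y_n = 0\right\}.
\end{equation*}
On the first event, $\nonzero[Y_n] = Y_n$ by Definition \ref{def:nonzero}, so it is contained in $\{\abs{Y_n - y} \geq \varepsilon\}$, whose probability vanishes by the convergence in probability $Y_n \plim y$. For the second term, I would use that $\{Y_n = 0\} \subseteq \{\abs{Y_n - y} \geq \abs{y}\}$, since $\abs{y} > 0$ by hypothesis; this probability likewise tends to $0$. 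Summing, $\mathbb{P}\{\abs{\nonzero[Y_n] - y} \geq \varepsilon\} \to 0$, which is exactly convergence in probability of $\nonzero[Y_n]$ to $y$.

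There is essentially no obstacle: the argument is a one-line union-bound once one notices that the assumption $y \neq 0$ is precisely what is needed to control the contribution of $\{Y_n = 0\}$. The only subtlety worth flagging is that on $\{Y_n = 0\}$ the value $\nonzero[Y_n] = 1$ may or may not lie within $\varepsilon$ of $y$, but this does not matter since we crudely bound that event by $\mathbb{P}\{Y_n = 0\}$, which already goes to zero.
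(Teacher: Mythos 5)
Your proof is correct and follows essentially the same route as the paper: both arguments reduce the discrepancy between $\nonzero[Y_n]$ and $Y_n$ to the event $\{Y_n = 0\}$, whose probability vanishes because $Y_n \plim y$ with $y \neq 0$ (the paper bounds $\proba{Y_n=0}$ by $\proba{\abs{Y_n-y}>\abs{y}/2}$, you by $\proba{\abs{Y_n-y}\geq\abs{y}}$, an immaterial difference). Your event-splitting version is, if anything, slightly cleaner than the paper's triangle-inequality split.
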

\begin{proof}
By the triangle inequality, we have, for each $n \geq 1$
$$|\nonzero[Y_n] - y| \leq |\nonzero[Y_n] - Y_n| + |Y_n - y| = \one_{Y_n=0} + |Y_n - y|\eqsp,$$
implying that for each $\varepsilon > 0$
$$\proba{|\nonzero[Y_n] - y|>\varepsilon} \leq \proba{\one_{Y_n=0}>\varepsilon} + \proba{|Y_n - y|>\varepsilon}\eqsp,$$
where the second probability converges to $0$ by definition of $Y_n \plim y$ and the first one is bounded by
\begin{align*}
\proba{\one_{Y_n=0}>\varepsilon} & \leq \proba{Y_n=0} \eqsp, \\
	&	\leq \proba{|Y_n - y|>|y|/2}\eqsp,
\end{align*}
where the last probability converges to $0$ by definition of $Y_n \plim y$.
\end{proof}

\begin{lemma}\label{lem:nonzero-clt}
Let $(X_n^1, \cdots, X_n^d, Y_n)_{n \geq 1}$ be a sequence a random variables in  $\rset^{d+1}$ such that
\begin{enumerate}
\item \label{lem:ass:probaconv}  $Y_n \plim y$ where $y$ is real constant such that $y\neq 0$,
\item \label{lem:ass:nonzerospeed} There exists $c \in [0,1)$ such that $\proba{Y_n = 0} \leq c^n$,
\item \label{lem:ass:clt} There exist $(x_1, \cdots, x_d)\in\rset^d$ and a random variable $V$ in $\rset^{d+1}$
\begin{equation*}
\sqrt{n}(X_n^1-x_1, \cdots, X_n^d-x_d, Y_n-y) \dlim V \eqsp.
\end{equation*}
\end{enumerate}
then the assertions \ref{lem:ass:probaconv} and \ref{lem:ass:clt} are satisfied with $\nonzero[Y_n]$ where $\nonzero$ is defined in Definition \ref{def:nonzero}.
\end{lemma}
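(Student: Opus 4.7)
The statement asks us to verify two properties for $\nonzero[Y_n]$: the convergence in probability to $y$, and the joint CLT with the $X_n^i$'s. The first is immediate: Lemma~\ref{lem:nonzero-probaconv} applies directly under assumption \ref{lem:ass:probaconv} (since $y\neq 0$), giving $\nonzero[Y_n]\plim y$. So the real content is extending the CLT in assumption \ref{lem:ass:clt} from $Y_n$ to $\nonzero[Y_n]$.

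The plan is a Slutsky-type decomposition. Since $\nonzero[Y_n]$ differs from $Y_n$ only on the event $\{Y_n=0\}$, on which $\nonzero[Y_n]=1$, we have the pointwise identity
\begin{equation*}
\nonzero[Y_n]-Y_n \;=\; \one_{Y_n=0}\eqsp.
\end{equation*}
Writing $W_n \eqdef \sqrt{n}\bigl(X_n^1-x_1,\ldots,X_n^d-x_d,Y_n-y\bigr)$ and $R_n \eqdef (0,\ldots,0,\sqrt{n}\,\one_{Y_n=0})$, we have
\begin{equation*}
\sqrt{n}\bigl(X_n^1-x_1,\ldots,X_n^d-x_d,\nonzero[Y_n]-y\bigr) \;=\; W_n + R_n\eqsp.
\end{equation*}
Assumption \ref{lem:ass:clt} gives $W_n\dlim V$, so by Slutsky's theorem it suffices to prove $R_n\plim 0$, i.e.\ that $\sqrt{n}\,\one_{Y_n=0}\plim 0$.

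This is where assumption \ref{lem:ass:nonzerospeed} is essential: the mere probability convergence of $Y_n$ to $y\neq 0$ would only give $\proba{Y_n=0}\to 0$, which is not fast enough to offset the $\sqrt{n}$ inflation. But the hypothesis $\proba{Y_n=0}\leq c^n$ with $c\in[0,1)$ yields, for every $\varepsilon>0$ and $n$ large enough (so that $\sqrt{n}>\varepsilon$),
\begin{equation*}
\proba{\sqrt{n}\,\one_{Y_n=0}>\varepsilon} \;\leq\; \proba{Y_n=0}\;\leq\; c^n \;\longrightarrow\; 0\eqsp,
\end{equation*}
which gives the desired convergence in probability. Combining this with $W_n\dlim V$ via Slutsky yields assertion \ref{lem:ass:clt} for $\nonzero[Y_n]$, completing the proof.

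The only step requiring care is the speed of $\proba{Y_n=0}\to 0$; the geometric bound in hypothesis \ref{lem:ass:nonzerospeed} is precisely what makes $\sqrt{n}\,\one_{Y_n=0}$ vanish in probability, and is the only mildly non-routine ingredient in an otherwise direct Slutsky argument.
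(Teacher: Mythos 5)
Your proof is correct, and it reaches the conclusion by a slightly different route than the paper. Both arguments hinge on the same two observations — the pointwise identity $\nonzero[Y_n]-Y_n=\one_{Y_n=0}$ and the fact that the geometric bound $\proba{Y_n=0}\leq c^n$ beats the $\sqrt{n}$ inflation — but you turn this into the statement $\sqrt{n}\,\one_{Y_n=0}\plim 0$ and then invoke Slutsky (the converging-together lemma) to transfer the limit from $W_n$ to $W_n+R_n$, whereas the paper proves the convergence in distribution directly via the portmanteau characterization with bounded Lipschitz test functions, bounding the discrepancy in expectation by $K\sqrt{n}\,\esp{\abs{\nonzero[Y_n]-Y_n}}=K\sqrt{n}\,\proba{Y_n=0}\leq K\sqrt{n}\,c^n\to 0$. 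Your version is arguably the more standard and economical one (it uses exactly the same Slutsky lemma that the paper invokes later anyway, in the proof of Proposition \ref{prop:multiratio-clt}); the paper's version is self-contained at the level of test functions and in passing shows the slightly stronger $L_1$ statement that $\sqrt{n}\,\esp{\abs{\nonzero[Y_n]-Y_n}}\to 0$, though only convergence in probability of the remainder is needed. Your treatment of the first assertion, via Lemma \ref{lem:nonzero-probaconv}, coincides with the paper's.
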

\begin{proof}
Assumption \ref{lem:ass:probaconv} and Lemma \ref{lem:nonzero-probaconv} directly give $\nonzero[Y_n] \plim y$.

In order to proove the distribution convergence, we use the portemanteau lemma by showing that for all bounded Lipschitz function $f$, $\esp{f(\sqrt{n}(X_n^1-x_1, \cdots, X_n^d-x_d, \nonzero[Y_n]-y))}$ converges to $\esp{f(V)}$. Let $f$ be a bounded and Lipschitz function, we have
\begin{multline}\label{eq:nonzero-clt:triangle}
\left|\esp{f(\sqrt{n}(X_n^1-x_1, \cdots, X_n^d-x_d, \nonzero[Y_n]-y))} - \esp{f(V)}\right| \\
\leq \esp{\left|f(\sqrt{n}(X_n^1-x_1, \cdots, X_n^d-x_d, \nonzero[Y_n]-y))-f(V)\right|} \eqsp, \\
\leq \esp{\left|f(\sqrt{n}(X_n^1-x_1, \cdots, X_n^d-x_d, \nonzero[Y_n]-y)-f(\sqrt{n}(X_n^1-x_1, \cdots, X_n^d-x_d, Y_n-y))\right|}
\\+ \esp{\left|f(\sqrt{n}(X_n^1-x_1, \cdots, X_n^d-x_d, Y_n-y))-f(V)\right|}\eqsp.
\end{multline}
According to Assumption \ref{lem:ass:clt}, the second term of the right hand side of \eqref{eq:nonzero-clt:triangle} converges to $0$. The first term is handled using the Lipschitz property of $f$: there exists a constant K such that for all $(a,b)\in(\rset^{d+1})^2$, $|f(a)-f(b)|_{L_1} \leq K ||a-b||_{L_1}$ so that
\begin{multline*}
\esp{\left|f(\sqrt{n}(X_n^1-x_1, \cdots, X_n^d-x_d, \nonzero[Y_n]-y)-f(\sqrt{n}(X_n^1-x_1, \cdots, X_n^d-x_d, Y_n-y))\right|}  \\
\leq K\sqrt{n} \esp{\left|\left|(X_n^1-x_1, \cdots, X_n^d-x_d, \nonzero[Y_n]-y)-(X_n^1-x_1, \cdots, X_n^d-x_d, Y_n-y)\right|\right|_{L_1}} \eqsp, \\
= K\sqrt{n} \esp{\left|\nonzero[Y_n]- Y_n\right|}
= K\sqrt{n} \esp{\one_{Y_n=0}}
= K\sqrt{n} \proba{Y_n=0}  \eqsp,\\
\leq K\sqrt{n} c^n \eqsp, \quad \mbox{according to Assumption \ref{lem:ass:nonzerospeed},}
\end{multline*}
which shows that the first term of the right hand side of \eqref{eq:nonzero-clt:triangle} converges to $0$ and that $\sqrt{n}(X_n^1-x_1, \cdots, X_n^d-x_d, \nonzero[Y_n]-y) \dlim V$.
\end{proof}

\begin{proposition}\label{prop:multiratio-clt}
Let $(X_n, Y_n, X'_n, Y'_n)_{n \geq 1}$ be a sequence a random variables in  $\rset^4$, $(x,y,x',y') \in \rset^4$ and $\Sigma$ a $4\times4$ covariance matrix such that
\begin{enumerate}
\item \label{prop:ass:nonzeroy} $y \neq 0$ and $y' \neq 0$,
\item \label{prop:ass:probaconv}  $Y_n \plim y$ and  $Y'_n \plim y'$,
\item \label{prop:ass:nonzerospeed} There exists $c \in [0,1)$ such that $\proba{Y_n = 0} \leq c^n$ and $\proba{Y'_n = 0} \leq c^n$,
\item \label{prop:ass:clt} The sequence $(X_n, Y_n, X'_n, Y'_n)_{n \geq 1}$ satisfies the following central limit theorem
\begin{equation*}
\sqrt{n}\left(\begin{array}{c}X_n-x\\ Y_n-y\\ X'_n-x'\\ Y'_n-y'\end{array}\right) \dlim \normallaw{\left(\begin{array}{c}0\\ 0\\ 0\\ 0\end{array}\right)}{\Sigma} \eqsp.
\end{equation*}
\end{enumerate}
Then the ratio sequence $(X_n/\nonzero[Y_n],X'_n/\nonzero[Y'_n])_{n \geq 1}$ satisfies the following central limit theorem
\begin{equation*}
\sqrt{n}\left(\begin{array}{c}\dfrac{X_n}{\nonzero[Y_n]}-\dfrac{x}{y}\\ \dfrac{X'_n}{\nonzero[Y'_n]}-\dfrac{x'}{y'}\end{array}\right) \dlim \normallaw{\left(\begin{array}{c}0\\ 0\end{array}\right)}{P^T\Sigma P} \eqsp,\quad \mbox{where}\quad
P \eqdef \left(\begin{array}{cc}
\dfrac{1}{y}	&	0	\\
-\dfrac{x}{y^2}	&	0	\\
0			&	\dfrac{1}{y'} 	\\
0			&	-\dfrac{x'}{(y')^2}
\end{array}\right)\eqsp.
\end{equation*}
\end{proposition}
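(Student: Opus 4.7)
The plan is to prove Proposition \ref{prop:multiratio-clt} by combining Lemma \ref{lem:nonzero-clt} (applied twice, to remove the possibility of a zero denominator without disturbing the joint CLT) with the multivariate delta method applied to the map $g : (u,v,u',v') \mapsto (u/v,\;u'/v')$ at the point $(x,y,x',y')$.

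First, I would apply Lemma \ref{lem:nonzero-clt} to replace $Y_n$ by $\nonzero[Y_n]$. Treating $(X_n, X'_n, Y'_n)$ as the other coordinates and $Y_n$ as the singled-out scalar, the three hypotheses of the lemma are exactly Assumptions \ref{prop:ass:probaconv}, \ref{prop:ass:nonzerospeed} and \ref{prop:ass:clt} (after an obvious coordinate reordering that leaves $\Sigma$ unchanged up to a permutation of indices). The lemma preserves both the probability limit and the joint CLT, yielding
\[
\sqrt{n}\bigl(X_n-x,\;\nonzero[Y_n]-y,\;X'_n-x',\;Y'_n-y'\bigr)\dlim\normallaw{0}{\Sigma}\eqsp.
\]
A second application of the lemma, now with $Y'_n$ in the singled-out position and $(X_n, \nonzero[Y_n], X'_n)$ as the other coordinates (whose joint CLT was just established), delivers
\[
\sqrt{n}\bigl(X_n-x,\;\nonzero[Y_n]-y,\;X'_n-x',\;\nonzero[Y'_n]-y'\bigr)\dlim\normallaw{0}{\Sigma}\eqsp.
\]

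Second, since $y\neq 0$ and $y'\neq 0$ by Assumption \ref{prop:ass:nonzeroy}, the map $g$ is continuously differentiable in a neighbourhood of $(x,y,x',y')$, and its Jacobian at that point is the $2\times 4$ matrix whose first row is $(1/y,\;-x/y^2,\;0,\;0)$ and whose second row is $(0,\;0,\;1/y',\;-x'/(y')^2)$; this matrix coincides with $P^T$. Applying the multivariate delta method to the convergence obtained in the first step then yields
\[
\sqrt{n}\bigl(X_n/\nonzero[Y_n]-x/y,\;X'_n/\nonzero[Y'_n]-x'/y'\bigr)\dlim\normallaw{0}{P^T\Sigma P}\eqsp,
\]
which is exactly the desired conclusion.

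The argument is essentially routine once Lemma \ref{lem:nonzero-clt} is on hand, and there is no hard obstacle. The only genuine subtlety is that one cannot apply the delta method directly to $(X_n/Y_n, X'_n/Y'_n)$, since $Y_n$ or $Y'_n$ might vanish on an event of positive probability and the ratios would then be ill-defined; the $\nonzero$ regularization is precisely what makes $g$ well-defined along the sequence while mapping into $\rset^*$. The exponential bound $\proba{Y_n=0}\leq c^n$ in Assumption \ref{prop:ass:nonzerospeed} is exactly strong enough, via the Lipschitz estimate inside Lemma \ref{lem:nonzero-clt}, to guarantee that this regularization is invisible at the $\sqrt{n}$ scale.
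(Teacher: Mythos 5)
Your proof is correct and takes essentially the same approach as the paper: both arguments apply Lemma \ref{lem:nonzero-clt} twice to carry the joint CLT over to $(X_n,\nonzero[Y_n],X'_n,\nonzero[Y'_n])$ and then deduce the ratio CLT. The only difference is cosmetic: where you invoke the multivariate delta method for $(u,v,u',v')\mapsto(u/v,\,u'/v')$ with Jacobian $P^T$, the paper writes out the same step by hand, decomposing the centered ratios as $P_n^T$ times the centered vector with $P_n \plim P$ (via Lemma \ref{lem:nonzero-probaconv}) and concluding by Slutsky's lemma.
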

\begin{proof}
We first rewrite $\frac{X_n}{\nonzero[Y_n]}-\frac{x}{y}$ as
\begin{align*}
\dfrac{X_n}{\nonzero[Y_n]}-\dfrac{x}{y} & = \dfrac{X_n}{\nonzero[Y_n]} - \dfrac{x}{\nonzero[Y_n]} +  \dfrac{x}{\nonzero[Y_n]} - \dfrac{x}{y} \eqsp, \\
	&	= \dfrac{y}{\nonzero[Y_n]} \dfrac{1}{y} (X_n-x) - \dfrac{y}{\nonzero[Y_n]} \dfrac{x}{y^2} (\nonzero[Y_n]-y) \eqsp,
\end{align*}
and simarly
$$\dfrac{X'_n}{\nonzero[Y'_n]}-\dfrac{x'}{y'} = \dfrac{y'}{\nonzero[Y'_n]} \dfrac{1}{y'} (X'_n-x') - \dfrac{y'}{\nonzero[Y'_n]} \dfrac{x'}{(y')^2} (\nonzero[Y'_n]-y') \eqsp,$$
so that
\begin{equation*}
\sqrt{n}\left(\begin{array}{c}\dfrac{X_n}{\nonzero[Y_n]}-\dfrac{x}{y}\\ \dfrac{X'_n}{\nonzero[Y'_n]}-\dfrac{x'}{y'}\end{array}\right) =
P_n^T
\sqrt{n}\left(\begin{array}{c}X_n-x\\ \nonzero[Y_n]-y\\ X'_n-x'\\ \nonzero[Y'_n]-y'\end{array}\right)\eqsp,
\end{equation*}
where
\begin{equation*}
P_n \eqdef
\left(\begin{array}{cc}
\dfrac{y}{\nonzero[Y_n]}\dfrac{1}{y}	&	0	\\
-\dfrac{y}{\nonzero[Y_n]}\dfrac{x}{y^2}	&	0	\\
0			&	\dfrac{y'}{\nonzero[Y'_n]}\dfrac{1}{y'} 	\\
0			&	-\dfrac{y'}{\nonzero[Y'_n]}\dfrac{x'}{(y')^2}
\end{array}\right)\eqsp.
\end{equation*}
By applying Lemma \ref{lem:nonzero-probaconv}, $\nonzero[Y_n] \plim y$ and  $\nonzero[Y'_n] \plim y'$ so that $P_n \plim P$. Furthermore, using Lemma \ref{lem:nonzero-clt} twice, we successively get that $(X_n, \nonzero[Y_n], X'_n, Y'_n)_{n \geq 1}$ and then $(X_n, \nonzero[Y_n], X'_n, \nonzero[Y'_n])_{n \geq 1}$ satisfy the CLT stated in Assumption \ref{prop:ass:clt}. We then only need to apply the Slutsky lemma to conclude.
\end{proof}

\begin{corollary}\label{cor:ratio-clt}
Let $(X_n, Y_n)_{n \geq 1}$ be a sequence a random variables in  $\rset^2$, $(x,y) \in \rset^2$ and $\Sigma$ a $2\times2$ covariance matrix such that
\begin{enumerate}
\item  $y \neq 0$,
\item  $Y_n \plim y$,
\item There exists $c \in [0,1)$ such that $\proba{Y_n = 0} \leq c^n$,
\item $(X_n, Y_n)_{n \geq 1}$ satisfies the following central limit theorem
\begin{equation*}
\sqrt{n}\left(\begin{array}{c}X_n-x\\ Y_n-y\end{array}\right) \dlim \normallaw{\left(\begin{array}{c}0\\ 0\end{array}\right)}{\Sigma} \eqsp.
\end{equation*}
\end{enumerate}
Then the ratio sequence $(X_n/\nonzero[Y_n])_{n \geq 1}$ satisfies the following central limit theorem
\begin{equation*}
\sqrt{n}\left(\dfrac{X_n}{\nonzero[Y_n]}-\dfrac{x}{y}\right) \dlim \normallaw{0}{P^T\Sigma P} \eqsp,\quad \mbox{where}\quad
P \eqdef \left(\begin{array}{c}
\dfrac{1}{y}\\
-\dfrac{x}{y^2}
\end{array}\right)\eqsp.
\end{equation*}
\end{corollary}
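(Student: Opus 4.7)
The plan is to mimic the proof of Proposition \ref{prop:multiratio-clt} in the simpler one-ratio setting; indeed, the corollary is essentially the degenerate specialization of that proposition in which the second pair $(X'_n, Y'_n)$ is removed. I prefer, however, to give a self-contained derivation via a one-term delta-method identity rather than to invoke the proposition as a black box, since the argument is short.

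First, I would apply Lemma \ref{lem:nonzero-probaconv} to get $\nonzero[Y_n] \plim y$, and Lemma \ref{lem:nonzero-clt} (with $d=1$) to promote the hypothesized joint CLT on $(X_n, Y_n)$ to the same joint CLT on $(X_n, \nonzero[Y_n])$. This is the only place where the geometric-decay hypothesis $\proba{Y_n = 0} \leq c^n$ is needed; it controls the $\sqrt{n}$-inflated error introduced on the event $\{Y_n = 0\}$.

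Second, I would use the algebraic identity
\begin{equation*}
\dfrac{X_n}{\nonzero[Y_n]} - \dfrac{x}{y} = \dfrac{y}{\nonzero[Y_n]}\left[\dfrac{1}{y}(X_n - x) - \dfrac{x}{y^2}(\nonzero[Y_n] - y)\right],
\end{equation*}
which, after multiplying by $\sqrt{n}$, rewrites as
\begin{equation*}
\sqrt{n}\left(\dfrac{X_n}{\nonzero[Y_n]} - \dfrac{x}{y}\right) = \dfrac{y}{\nonzero[Y_n]}\, P_n^T\, \sqrt{n}\left(\begin{array}{c} X_n - x \\ \nonzero[Y_n] - y \end{array}\right),
\end{equation*}
where $P_n$ is the deterministic $2\times 1$ vector with entries $1/y$ and $-x/y^2$, i.e.\ $P_n = P$. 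Since $y/\nonzero[Y_n] \plim 1$ and the $\sqrt{n}$-scaled vector on the right converges in distribution to $\normallaw{0}{\Sigma}$, Slutsky's lemma delivers the announced convergence of the left-hand side to $P^T \normallaw{0}{\Sigma} = \normallaw{0}{P^T \Sigma P}$.

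There is essentially no obstacle: the only subtle point is the legitimate substitution of $\nonzero[Y_n]$ for $Y_n$ inside the CLT, and Lemma \ref{lem:nonzero-clt} was designed precisely for this. An alternative, almost as short, is to directly invoke Proposition \ref{prop:multiratio-clt} with, say, $X'_n \equiv 0$, $Y'_n \equiv 1$ (so that all assumptions trivially transfer) and read off the $(1,1)$ block of the resulting $2 \times 2$ asymptotic covariance.
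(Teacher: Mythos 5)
Your proposal is correct and essentially matches the paper: the paper proves the corollary simply as a direct consequence of Proposition \ref{prop:multiratio-clt} (keeping one marginal), and your self-contained argument merely inlines that proposition's own proof (Lemmas \ref{lem:nonzero-probaconv} and \ref{lem:nonzero-clt}, the same ratio decomposition, then Slutsky) in the two-dimensional case. Your alternative of invoking Proposition \ref{prop:multiratio-clt} with $X'_n \equiv 0$, $Y'_n \equiv 1$ is exactly the paper's route, and in fact spells out the embedding the paper leaves implicit.
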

\begin{proof}
This is a direct consequence of Proposition \ref{prop:multiratio-clt} by keeping only the first marginal of the ratio couple.
\end{proof}

\section{Proofs of Central Limit Theorems}
\label{appendix:proofs-clt}

\begin{proof}[Proof of Theorem \ref{th:clt}]
The vector $(\sumXtildea, \sumYtildea, \sumXtildeb, \sumYtildeb)$ is made of empirical means of random variables $(\Xtildea[i], \Ytildea[i], \Xtildeb[i], \Ytildeb[i])_{i \geq 1}$.
According to Definition \ref{def:tilde} and to Assumption A\ref{hypA:iid}, these variables are i.i.d. and by the same definition they are centered on $(\mean{\Xa}, \mean{\Ya}, \mean{\Xb}, \mean{\Yb})$.
Furthermore, one directly sees that
\begin{equation*}
\abs{\Xtildea[1]} \leq \frac{1}{\ratioa} \abs{\Xa[1]} \eqsp, \eqsp \abs{\Ytildea[1]} \leq \frac{1}{\ratioa} \abs{\Ya[1]} \eqsp, \eqsp \abs{\Xtildeb[1]} \leq \frac{1}{\ratioa} \abs{\Xb[1]} \eqsp, \eqsp \abs{\Ytildeb[1]} \leq \frac{1}{\ratioa} \abs{\Yb[1]}\eqsp,
\end{equation*}
which, combined with Assumption A\ref{hypA:moment}, shows that $(\Xtildea[1], \Xtildeb[1], \Ytildea[1], \Ytildeb[1])$ is $L_2$-integrable. We then can apply a multi-dimensional version of the central limit theorem to get the announced convergence in distribution result.

It now only remains to calculate the related variances and covariances. By Definition\ref{def:tilde}, we have
\begin{align*}
\var{\Xtildea[1]} & = \var{\dfrac{\epsilona[1]\Xa[1]}{\ratioa}} \eqsp, \\
	& = \frac{1}{\ratioa^2} \left\{ \esp{(\epsilona[1])^2(\Xa[1])^2} - \esp{\epsilona[1]\Xa[1]}^2 \right\} \eqsp, \\
	& = \frac{1}{\ratioa^2}  \left\{ \esp{\epsilona[1]}\esp{(\Xa[1])^2} - \esp{\epsilona[1]}^2\esp{\Xa[1]}^2 \right\} \eqsp, \quad \mbox{by Assumption A\ref{hypA:abtest-independence},}\\
	& = \frac{1}{\ratioa}\esp{(\Xa[1])^2} - \mean{\Xa}^2 \eqsp, \\
	& = \frac{1}{\ratioa} \left[\stddev{\Xa}^2 + \mean{\Xa}^2\right] - \mean{\Xa}^2 \eqsp, \quad \mbox{according to assumation A\ref{hypA:moment},}\\
	& = \dfrac{1}{\ratioa}\stddev{\Xa}^2 + \dfrac{1-\ratioa}{\ratioa} \mean{\Xa}^2 \eqsp.
\end{align*}
The same stands for $\var{\Ytildea[1]}$, $\var{\Xtildeb[1]}$, and $\var{\Ytildeb[1]}$, and very similar steps allows to get the values of $\cov{\Xtildea[1]}{\Ytildea[1]}$ and $\cov{\Xtildeb[1]}{\Ytildeb[1]}$.

Using again Definition \ref{def:tilde} and Assumption A\ref{hypA:epsilon-law}, one gets
\begin{align*}
\cov{\Xtildea[1]}{\Xtildeb[1]} & = \esp{\dfrac{\epsilona[1]\Xa[1]}{\ratioa}\dfrac{\epsilonb[1]\Xb[1]}{\ratiob}} - \esp{\Xtildea[1]}\esp{\Xtildeb[1]} \eqsp, \\
	& = -\mean{\Xa}\mean{\Xb}\eqsp, \quad \mbox{as } \epsilona[1]\epsilonb[1] = 0 \eqsp,
\end{align*}
and the same formula can be derived for $\cov{\Xtildea[1]}{\Ytildeb[1]}$, $\cov{\Ytildea[1]}{\Xtildeb[1]}$ and $\cov{\Ytildea[1]}{\Ytildeb[1]}$.
\end{proof}

\begin{proof}[Proof of Proposition \ref{prop:clt-diff}]
Define a continuous function $g$ from $\rset^4$ to $\rset$ by
\begin{multline*}
\forall (x_{\refpop},y_{\refpop},x_{\testpop},y_{\testpop}) \in \rset^4\eqsp,\\
g\left((x_{\refpop},y_{\refpop},x_{\testpop},y_{\testpop})^T\right) \eqdef x_{\testpop} - x_{\refpop} = (-1,0,1,0) \times  (x_{\refpop},y_{\refpop},x_{\testpop},y_{\testpop})^T\eqsp,
\end{multline*}
so that
$$\sqrt{\numuid} \left[\left(\sumXtildeb- \sumXtildea\right) - \left(\mean{\Xb}-\mean{\Xa}\right)\right] = g\left(\sqrt{\numuid}
\left(
	\begin{array}{c}
		\sumXtildea - \mean{\Xa} \\
		\sumYtildea - \mean{\Ya} \\
		\sumXtildeb - \mean{\Xb} \\
		\sumYtildeb - \mean{\Yb}
	\end{array}
\right)
\right) \eqsp.$$
Then, by the continuous mapping theorem and Theorem \ref{th:clt}, $\sqrt{\numuid} \left[\left(\sumXtildeb- \sumXtildea\right) - \left(\mean{\Xb}-\mean{\Xa}\right)\right]$ converges in distribution to a normal random variable of mean $0$ and variance
$$(-1,0,1,0)\Sigma\left(\Xtildea[1], \Ytildea[1], \Xtildeb[1], \Ytildeb[1] \right)(-1,0,1,0)^T\eqsp.$$
\end{proof}

Before moving to proofs of ratio CLT, we need two intermediary Lemmas.
\begin{lemma} \label{lem:positive-means}
Under Assumptions A\ref{hypA:moment}-\ref{hypA:non-negative}, we have
\begin{equation*}
\mean{\Xa} > 0 \eqsp, \quad \mean{\Ya} > 0 \eqsp, \quad \mean{\Xb} > 0 \eqsp, \quad \mean{\Yb} > 0 \eqsp.
\end{equation*}
\end{lemma}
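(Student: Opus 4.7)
The plan is to invoke a standard measure-theoretic fact: a non-negative random variable whose support has positive probability of being strictly positive must have strictly positive expectation. The four statements in the lemma are identical in structure, so I would prove the first one, $\mean{\Xa} > 0$, and then indicate that the three remaining cases follow by the same argument with $\Ya[1]$, $\Xb[1]$, $\Yb[1]$ in place of $\Xa[1]$.

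First, observe that Assumption A\ref{hypA:non-negative} gives $\proba{\Xa[1] \geq 0} = 1$, so the expectation $\mean{\Xa} = \esp{\Xa[1]}$ is well-defined as a non-negative real number (it is finite by Assumption A\ref{hypA:moment}). Next, I would use the countable-union argument: since $\{\Xa[1] > 0\} = \bigcup_{k \geq 1} \{\Xa[1] > 1/k\}$ is an increasing union and $\proba{\Xa[1] > 0} > 0$ by Assumption A\ref{hypA:non-negative}, continuity from below of $\mathbb{P}$ gives the existence of some integer $k \geq 1$ with $\proba{\Xa[1] > 1/k} > 0$.

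Then I would bound
\begin{equation*}
\mean{\Xa} = \esp{\Xa[1]} \geq \esp{\Xa[1] \one_{\Xa[1] > 1/k}} \geq \frac{1}{k} \proba{\Xa[1] > 1/k} > 0 \eqsp,
\end{equation*}
where the first inequality uses the non-negativity given by A\ref{hypA:non-negative} (so that $\esp{\Xa[1] \one_{\Xa[1] \leq 1/k}} \geq 0$), and the second inequality uses the indicator's defining property. This yields $\mean{\Xa} > 0$.

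The result is essentially immediate and I do not anticipate any real obstacle: the only subtlety is that the statement $\proba{X > 0} > 0 \Rightarrow \esp{X} > 0$ for non-negative $X$ requires truncating at $1/k$ for some $k$ to turn a qualitative positivity into a quantitative lower bound, which is what the argument above does. The same three-line argument, applied verbatim to $\Ya[1]$, $\Xb[1]$, and $\Yb[1]$, concludes the proof.
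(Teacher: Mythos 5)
Your proof is correct and is essentially the paper's argument in contrapositive form: the paper assumes $\mean{\Xa}=0$, applies Markov's inequality at levels $1/n$ to get $\proba{\Xa[1]\geq 1/n}=0$ for all $n$, and contradicts A\ref{hypA:non-negative}, whereas you run the same truncation-at-$1/k$ bound (which is exactly Markov's inequality) in the direct direction. No gap; both proofs rest on the same inequality and the same countable union identifying $\{\Xa[1]>0\}$ with $\bigcup_k\{\Xa[1]>1/k\}$.
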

\begin{proof}
According to assumpation A\ref{hypA:non-negative}, $\Xa[1] \geq 0$ almost surely, which implies that $\mean{\Xa} = \esp{\Xa[1]} \geq 0$. Furthermore, by the Markov inequality, for any $n \geq 1$ we have:
$$\proba{\Xa[1] \geq 1/n} \leq n \mean{\Xa}\eqsp.$$
If $\mean{\Xa} = 0$ then for any $n \geq 1$, $\proba{\Xa[1] \geq 1/n} = 0$ and thus $\proba{\Xa[1] > 0} = 0$ which is in contradiction with Assumption A\ref{hypA:non-negative}.
\end{proof}

\begin{lemma}\label{lem:proba-sum-0}
Under Assumptions A\ref{hypA:iid}-\ref{hypA:epsilon-law} there exists a constant $c \in [0,1)$ such that
$$\proba{\sumXtildea = 0} \leq c^{\numuid}\eqsp,\quad \proba{\sumYtildea = 0} \leq c^{\numuid}\eqsp,\quad \proba{\sumXtildeb = 0} \leq c^{\numuid}\eqsp,\quad \proba{\sumYtildeb = 0} \leq c^{\numuid}\eqsp.$$
\end{lemma}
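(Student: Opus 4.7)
The plan is to reduce the event $\{\sumXtildea = 0\}$ to an intersection of i.i.d.\ individual events, and then show each individual event has probability strictly less than one.

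First I would use Assumption A\ref{hypA:non-negative}, which guarantees $\Xa[i] \geq 0$ almost surely, combined with Definition \ref{def:tilde} and the fact that $\ratioa, \ratiob \geq 0$, to conclude that $\Xtildea[i] \geq 0$ almost surely for every $i$. Since a sum of non-negative terms is zero iff every term is zero, we get
\begin{equation*}
\left\{\sumXtildea = 0\right\} = \bigcap_{i=1}^{\numuid} \left\{\Xtildea[i] = 0\right\} \quad \text{a.s.}
\end{equation*}
By Assumption A\ref{hypA:iid} and Remark \ref{rem:iid}, the variables $(\Xtildea[i])_{i\geq 1}$ are i.i.d., so
\begin{equation*}
\proba{\sumXtildea = 0} = \proba{\Xtildea[1] = 0}^{\numuid}\eqsp.
\end{equation*}

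Next I would show $c_{\Xtildea} \eqdef \proba{\Xtildea[1] = 0} < 1$. Since $\Xtildea[1] = \epsilona[1] \Xa[1] / \ratioa$ (assuming $\ratioa > 0$, otherwise the definition is degenerate), we have $\{\Xtildea[1] > 0\} = \{\epsilona[1] = 1\} \cap \{\Xa[1] > 0\}$. By Assumption A\ref{hypA:abtest-independence}, $\epsilona[1]$ and $\Xa[1]$ are independent, and by Assumption A\ref{hypA:epsilon-law}-\ref{hypA:epslion-law:marginal} we have $\proba{\epsilona[1] = 1} = \ratioa > 0$, while Assumption A\ref{hypA:non-negative} ensures $\proba{\Xa[1] > 0} > 0$. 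Therefore
\begin{equation*}
\proba{\Xtildea[1] > 0} = \ratioa \cdot \proba{\Xa[1] > 0} > 0\eqsp,
\end{equation*}
so $c_{\Xtildea} < 1$. Applying the same argument to $\Ytildea$, $\Xtildeb$, $\Ytildeb$ yields constants $c_{\Ytildea}, c_{\Xtildeb}, c_{\Ytildeb} \in [0,1)$. Setting $c \eqdef \max(c_{\Xtildea}, c_{\Ytildea}, c_{\Xtildeb}, c_{\Ytildeb}) \in [0,1)$ then gives the four required bounds simultaneously.

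There is no real obstacle here; the argument is essentially bookkeeping. The only subtle point is the implicit requirement that both $\ratioa > 0$ and $\ratiob > 0$, which is a natural non-degeneracy assumption on the AB-test split already implicit in Definition \ref{def:tilde} where one divides by these ratios.
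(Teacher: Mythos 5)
Your proof is correct and follows essentially the same route as the paper: factor $\proba{\sumXtildea = 0}$ into $\proba{\Xtildea[1]=0}^{\numuid}$ using non-negativity and the i.i.d.\ structure, then bound $\proba{\Xtildea[1]=0} = 1 - \ratioa\proba{\Xa[1]>0} < 1$ via Assumptions A\ref{hypA:abtest-independence}, A\ref{hypA:non-negative} and A\ref{hypA:epsilon-law}, and take $c$ as the worst of the four constants (your $\max$ of the single-user zero-probabilities is exactly the paper's $1-\min$ of the products). You are in fact slightly more explicit than the paper on the two implicit points (the sum-of-nonnegatives argument behind the factorization, and the need for $\ratioa,\ratiob>0$), which is fine.
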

\begin{proof}
We have
\begin{align*}
\proba{\sumXtildea = 0} & = \proba{\Xtildea[1] = 0}^{\numuid}\eqsp,	\\
	&	 = \proba{\epsilona[1]\Xa[1] = 0}^{\numuid}\eqsp,	\quad \mbox{by Definition \ref{def:tilde},}\\
	&	 = \left[1-\proba{\epsilona[1]\Xa[1] > 0}\right]^{\numuid}\eqsp, \\
	&	 = \left[1-\proba{\epsilona[1] > 0, \Xa[1] > 0}\right]^{\numuid}\eqsp, \\
	&	 = \left[1-\proba{\epsilona[1] > 0}\proba{\Xa[1] > 0}\right]^{\numuid}\eqsp,  \quad \mbox{by Assumption A\ref{hypA:abtest-independence},}\\
	&	 = \left[1-\ratioa\proba{\Xa[1] > 0}\right]^{\numuid}\eqsp,  \quad \mbox{by Assumption A\ref{hypA:epsilon-law},}
\end{align*}
where $1-\ratioa\proba{\Xa[1] > 0} \in [0, 1)$ by Assumption A\ref{hypA:non-negative}. The same steps applied to $\sumYtildeb$, $\sumXtildea$ and $\sumYtildeb$ achieve the proof by setting
\begin{equation*}
c \eqdef 1 - \mbox{min}\left[\ratioa\proba{\Xa[1] > 0}, \ratioa\proba{\Ya[1] > 0}, \ratiob\proba{\Xb[1] > 0}, \ratiob\proba{\Yb[1] > 0}\right] \eqsp.
\end{equation*}
\end{proof}

\begin{proof}[Proof of Proposition \ref{prop:clt-relative-diff}]
The proof is a direct application of Corollary \ref{cor:ratio-clt} of Appendix \ref{appendix:proofs} with $X_n = \sumXtildeb$ and $Y_n = \sumXtildea$. Its assumptions are all satisfied:
\begin{enumerate}
\item $\mean{\Xa} \neq 0$ by Lemma \ref{lem:positive-means},
\item According to the weak law of large numbers, $\sumXtildea \plim \mean{\Xa}$,
\item $\proba{\sumXtildea = 0} \leq c^{\numuid}$ according to Lemma \ref{lem:proba-sum-0}
\item According to Theorem \ref{th:clt}
\begin{equation*}
\sqrt{\numuid}\left(\begin{array}{c}
\sumXtildeb - \mean{\Xb} \\
\sumXtildea - \mean{\Xa}
\end{array}\right)
\dlim
\normallaw{
\left(\begin{array}{c}
0\\
0
\end{array}\right)
}{
\left(\begin{array}{cc}
\stddevXtildeb^2	&	-\mean{\Xa}\mean{Xb}	\\
-\mean{\Xa}\mean{Xb}&	\stddevXtildea^2
\end{array}\right)
}\eqsp.
\end{equation*}
\end{enumerate}
Corollary \ref{cor:ratio-clt} states that
\begin{equation*}
\sqrt{\numuid}\left( \dfrac{\sumXtildeb}{\nonzero[\sumXtildea]} - \dfrac{\mean{\Xb}}{\mean{\Xa}}\right) \dlim \normallaw{0}{P^T\left(\begin{array}{cc}
\stddevXtildeb^2	&	-\mean{\Xa}\mean{Xb}	\\
-\mean{\Xa}\mean{Xb}&	\stddevXtildea^2
\end{array}\right)P
}\eqsp,
\end{equation*}
where $P \eqdef \left(\dfrac{1}{\mean{\Xa}}, -\dfrac{\mean{\Xb}}{\mean{\Xa}^2}\right)^T$.
\end{proof}

\begin{proof}[Proof of Proposition \ref{prop:clt-ratio}]
The proof is a direct application of Proposition \ref{prop:multiratio-clt} of Appendix \ref{appendix:proofs} with $X_n = \sumXtildea$, $Y_n = \sumYtildea$,  $X'_n = \sumXtildeb$, and $Y'_n = \sumYtildeb$. Its assumptions are all satisfied:
\begin{enumerate}
\item $\mean{\Ya} \neq 0$ and $\mean{\Yb} \neq 0$ by Lemma \ref{lem:positive-means},
\item According to the weak law of large numbers, $\sumYtildea \plim \mean{\Ya}$ and $\sumYtildeb \plim \mean{\Yb}$,
\item $\proba{\sumYtildea} \leq c^{\numuid}$ and  $\proba{\sumYtildeb} \leq c^{\numuid}$ by Lemma \ref{lem:proba-sum-0},
\item According to Theorem \ref{th:clt}, the CLT condition is satisfied.
\end{enumerate}
Proposition \ref{prop:multiratio-clt} states that
\begin{equation*}
\sqrt{\numuid}\left(\begin{array}{c}
\dfrac{\sumXtildea}{\nonzero[\sumYtildea]} - \dfrac{\mean{\Xa}}{\mean{\Ya}} \\
\dfrac{\sumXtildeb}{\nonzero[\sumYtildeb]} - \dfrac{\mean{\Xb}}{\mean{\Yb}}
\end{array}\right)
 \dlim \normallaw{0}{P^T\Sigma\left(\Xtildea[1], \Ytildea[1], \Xtildeb[1], \Ytildeb[1] \right)P}\eqsp,
\end{equation*}
where $\Sigma\left(\Xtildea[1], \Ytildea[1], \Xtildeb[1], \Ytildeb[1] \right)$ is defined in Theorem \ref{th:clt} and
\begin{equation*}
P \eqdef \left(
\begin{array}{cc}
\dfrac{1}{\mean{\Ya}}	&	0	\\
-\dfrac{\mean{\Xa}}{\mean{\Ya}^2}	&	0	\\
0	&	\dfrac{1}{\mean{\Yb}}	\\
0	&	-\dfrac{\mean{\Xb}}{\mean{\Yb}^2}
\end{array}
\right)\eqsp.
\end{equation*}
\end{proof}

\begin{proof}[Proof of Proposition \ref{prop:clt-ratio-absolute-diff}]
The proof follows the same steps as the one of Proposition \ref{prop:clt-diff}.
\end{proof}

\begin{proof}[Proof of Proposition \ref{prop:clt-ratio-relative-diff}]
The proof is another application of Corrolary \ref{cor:ratio-clt} in Appendix \ref{appendix:proofs} with $X_n = \sumXtildeb/\nonzero[\sumYtildeb]$ and $Y_n =\sumXtildea/\nonzero[\sumYtildea]$ for which we check the assumptions:
\begin{enumerate}
\item $\mean{\Xa} / \mean{\Ya} \neq 0$ by Lemma \ref{lem:positive-means},
\item By the weak law of large numbers, we have $\sumXtildea \plim \mean{\Xa}$ and $\sumYtildea \plim \mean{\Ya}$. Then by Lemma \ref{lem:nonzero-probaconv}, $\nonzero[\sumYtildea] \plim \mean{\Ya}$ and we can apply the continuous mapping theorem to get $\sumXtildea/\nonzero[\sumYtildea] \plim \mean{\Xa}/\mean{\Ya}$,
\item According to Lemma \ref{lem:proba-sum-0}, we have $\proba{\sumXtildea/\nonzero[\sumYtildea] = 0} = \proba{\sumXtildea = 0} \leq c^{\numuid}$,
\item The central limit theorem is stated in Proposition \ref{prop:clt-ratio}.
\end{enumerate}
\end{proof}

\vskip 0.2in

\end{document}